\documentclass{article} %
\usepackage{arxiv,times}

\usepackage[colorlinks]{hyperref}
\usepackage{url}
\usepackage{amsfonts}
\usepackage{mathtools}
\usepackage{amsthm}
\usepackage{cleveref}
\usepackage{subcaption}
\usepackage{comment}
\usepackage{wrapfig}
\usepackage{booktabs}
\usepackage{mwe}
\usepackage{enumitem}

\usepackage{amsmath,amsfonts,bm}

\def\eqref#1{equation~\ref{#1}}

\def\1{\bm{1}}

\def\vb{{\bm{b}}}

\def\mA{{\bm{A}}}

\def\mC{{\bm{C}}}

\DeclareMathAlphabet{\mathsfit}{\encodingdefault}{\sfdefault}{m}{sl}
\SetMathAlphabet{\mathsfit}{bold}{\encodingdefault}{\sfdefault}{bx}{n}

\newcommand{\E}{\mathbb{E}}

\newcommand{\R}{\mathbb{R}}

\newcommand{\KL}{D_{\mathrm{KL}}}

\newcommand{\Cov}{\mathrm{Cov}}

\newcommand{\BHt}{B_H(t)}

\newcommand{\BHonet}{B_H^{(I)}(t)}
\newcommand{\BHones}{B_H^{(I)}(s)}

\newcommand{\BHtwot}{B_H^{(II)}(t)}
\newcommand{\BHtwos}{B_H^{(II)}(s)}
\newcommand{\BHonetwo}{B_H^{(I, II)}}
\newcommand{\BHonetwot}{B_H^{(I, II)}(t)}

\newcommand{\Kone}{K^{(I)}}
\newcommand{\Ktwo}{K^{(II)}}

\newcommand{\Konets}{K^{(I)}(t,s)}
\newcommand{\Ktwots}{K^{(II)}(t,s)}

\newcommand*\diff{\mathop{}\!\mathrm{d}}

\newcommand{\dt}{\diff{t}}
\newcommand{\du}{\diff{u}}
\newcommand{\dgm}{\diff{\gamma}}
\newcommand{\dWt}{\diff{W(t)}}
\newcommand{\dWs}{\diff{W(s)}}
\newcommand{\dXt}{\diff{X(t)}}
\newcommand{\dZt}{\diff{Z(t)}}
\newcommand{\dYkt}{\diff{Y_k(t)}}

\newcommand{\dBHt}{\diff{B_H(t)}}
\newcommand{\dMABHt}{\diff{\hat{B}_H(t)}}

\newcommand{\postZ}{\tilde{Z}}
\newcommand{\postX}{\tilde{X}}

\crefname{equation}{eq.}{eq.}
\Crefname{equation}{Eq.}{Eq.}
\crefname{theorem}{thm.}{thms.}
\Crefname{Theorem}{Thm.}{Thms.}
\crefname{proposition}{prop.}{props.}
\Crefname{proposition}{Prop.}{Props.}
\crefname{definition}{dfn.}{dfn.}
\Crefname{definition}{Dfn.}{Dfn.}
\crefname{remark}{remark}{remark}
\Crefname{Remark}{Remark}{Remark}
\Crefname{algorithm}{Alg.}{Alg.}

\newtheorem{thm}{Theorem}
\newtheorem{remark}{Remark}

\newtheorem{prop}{Proposition}
\newtheorem{dfn}{Definition}

\newcommand{\eg}{\textit{e}.\textit{g}. }

\newcommand{\ie}{\textit{i}.\textit{e}. }
\newcommand{\cf}{\textit{cf}.} 

\crefname{section}{Sec.}{Secs.}
\Crefname{section}{Sec.}{Secs.}
\crefname{equation}{Eq.}{Eqs.}
\Crefname{equation}{Eq.}{Eqs.}
\crefname{figure}{Fig.}{Figs.}
\Crefname{figure}{Fig.}{Figs.}
\crefname{table}{Tab.}{Tabs.}
\Crefname{table}{Tab.}{Tabs.}
\crefname{thm}{Thm.}{Thms.}
\Crefname{thm}{Thm.}{Thms.}
\crefname{dfn}{Dfn.}{Dfns.}
\crefname{dfn}{Dfn.}{Dfns.}
\crefname{remark}{remark}{remarks}
\Crefname{Remark}{Remark}{Remarks}
\crefname{prop}{Prop.}{Prop.}
\Crefname{prop}{Prop.}{Prop.}
\Crefname{algorithm}{Alg.}{Alg.}
\crefname{appendix}{App.}{apps.}
\Crefname{appendix}{App.}{Apps.}
\crefname{appsec}{appendix}{appendices}
\Crefname{appsec}{Appendix}{Appendices}

\renewcommand{\paragraph}[1]{{\vspace{1mm}\noindent \bf #1}.}

\newcommand{\expE}[1]{\E\left[ #1 \right]}

\definecolor{dark-blue}{rgb}{0.15,0.15,0.4}
\definecolor{medium-blue}{rgb}{0,0,0.5}
\hypersetup{
  colorlinks, linkcolor={dark-blue},
  citecolor={dark-blue}, urlcolor={medium-blue}
}

\newcommand{\oneh}{\frac{1}{2}}

\title{Variational Inference for SDEs Driven by Fractional Noise}

\author{Rembert Daems$^{1,2}$
  \And
  Manfred Opper$^{3,4,5}$
  \And
  Guillaume Crevecoeur$^{1,2}$
  \And
  Tolga Birdal$^{6}$
  \and
\hspace{0.5mm}\small{$^1$ Ghent University},\,
\hspace{-1.1mm}\qquad\qquad\small{$^2$ Flanders Make},\,
\quad\qquad\small{$^3$ Technical University of Berlin},\,\\
\hspace{-7mm}\noindent\small{$^4$ University of Birmingham},\,
\small{$^5$ University of Potsdam},\,
\small{$^6$ Imperial College  London} 
}

\iclrfinalcopy %
\begin{document}

\maketitle

\begin{abstract}
We present a novel variational framework for performing inference in (neural) stochastic differential equations (SDEs) driven by Markov-approximate fractional Brownian motion (fBM). SDEs offer a versatile tool for modeling real-world continuous-time dynamic systems with inherent noise and randomness. Combining SDEs with the powerful inference capabilities of variational methods, enables the learning of representative function distributions through stochastic gradient descent. However, conventional SDEs typically assume  the underlying noise to follow a Brownian motion (BM), which hinders their ability to capture long-term dependencies. In contrast, fractional Brownian motion (fBM) extends BM to encompass non-Markovian dynamics, but existing methods for inferring fBM parameters are either computationally demanding or statistically inefficient. 
In this paper, building upon the Markov approximation of fBM, we derive the evidence lower bound essential for efficient variational inference of posterior path measures, drawing from the well-established field of stochastic analysis. Additionally, we provide a closed-form expression to determine optimal approximation coefficients. Furthermore, we propose the use of neural networks to learn the drift, diffusion and control terms within our variational posterior, leading to the variational training of neural-SDEs. In this framework, we also optimize the Hurst index, governing the nature of our fractional noise. Beyond validation on synthetic data, we contribute a novel architecture for variational latent video prediction,—an approach that, to the best of our knowledge, enables the first variational neural-SDE application to video perception.

\end{abstract}

\vspace{-2mm}\section{Introduction}\vspace{-2mm}
Our surroundings constantly evolve over time, influenced by several dynamic factors, manifesting in various forms, from the weather patterns and the ebb \& flow of financial markets to the movements of objects \& observers, and the subtle deformations that reshape our environments~\citep{gojcic2021weakly,rempe2021humor}.
Stochastic differential equations (SDEs) provide a natural way to capture the randomness and continuous-time dynamics inherent in these real-world processes.
To extract meaningful information about the underlying system, \ie to infer the model parameters and to accurately predict the unobserved paths, variational inference (VI)~\citep{bishop2006pattern} is used as an efficient means, computing the posterior probability measure over paths~\citep{opper2019variational,li2020scalable,ryder2018black}\footnote{KL divergence between two SDEs over a finite time horizon has been well-explored in the control literature \citep{theodorou2015nonlinear,kappen2016adaptive}.}. 

The traditional application of SDEs assumes that the underlying noise processes are generated by standard Brownian motion (BM) with independent increments.
Unfortunately, for many practical scenarios, BM falls short of capturing the full complexity and richness of the observed real data, which often contains long-range dependencies, rare events, and intricate temporal structures that cannot be faithfully represented by a Markovian process. 
The non-Markovian fractional Brownian motion (fBM)~\citep{mandelbrot1968fractional} extends BM to stationary increments with a more complex dependence structure, \ie long-range dependence vs. roughness/regularity controlled by its \emph{Hurst index}~\citep{gatheral2018volatility}.
Yet, despite its desirable properties, the computational challenges and intractability of analytically working with fBMs pose significant challenges for inference.

In this paper, we begin by providing a tractable variational inference framework for SDEs driven by fractional Brownian motion (Types I \& II). To this end, we benefit from the relatively under-explored \emph{Markov representation} of fBM and path-wise approximate fBM through a linear combination of a finite number of Ornstein--Uhlenbeck (OU) processes driven by a common noise~\citep{carmona1998fractional,carmona1998simultaneous,harms2019affine}. We further introduce a differentiable method to optimise for the associated coefficients and conjecture (as well as empirically validate) that this \emph{strong} approximation enjoys super-polynomial convergence rates, allowing us to use a handful of processes even in complex problems.

Such \emph{Markov-isation} also allows us to inherit the well-established tools of traditional SDEs including Girsanov's change of measure theorem~\citep{oksendal2003stochastic}, which we use to derive and maximise the corresponding \emph{evidence lower bound} (ELBO) to yield posterior path measures as well as maximum likelihood estimates as illustrated in~\cref{fig:teaser}.  
We then use our framework in conjunction with neural networks to devise VI for neural-SDEs~\citep{liu2019neural,li2020scalable} driven by the said fractional diffusion. We deploy this model along with a novel neural architecture for the task of enhanced video prediction. To the best of our knowledge, this is the first time either fractional or variational neural-SDEs are used to model videos.
Our contributions are:
\vspace{-2.5mm}

\begin{itemize}[noitemsep,leftmargin=*,topsep=0em]
    \item We make accessible the relatively uncharted Markovian embedding of the fBM and its strong approximation, to the machine learning community. This allows us to employ the traditional machinery of SDEs in working with non-Markovian systems.
    \item We show how to balance the contribution of Markov processes by optimising for the combination coefficients in closed form. We further estimate the (time-dependent) {Hurst index} from data. 
    \item We derive the {evidence lower bound} for SDEs driven by approximate fBM of both Types I and II.
    \item We model the drift, diffusion and control terms in our framework by neural networks, and propose a novel architecture for video prediction.
\end{itemize}
\vspace{-1.5mm}
We will make our implementation publicly available upon publication.

\begin{figure}[t]
    \centering
    \begin{subfigure}{0.45\textwidth}
        \centering
        \includegraphics[width=\textwidth]{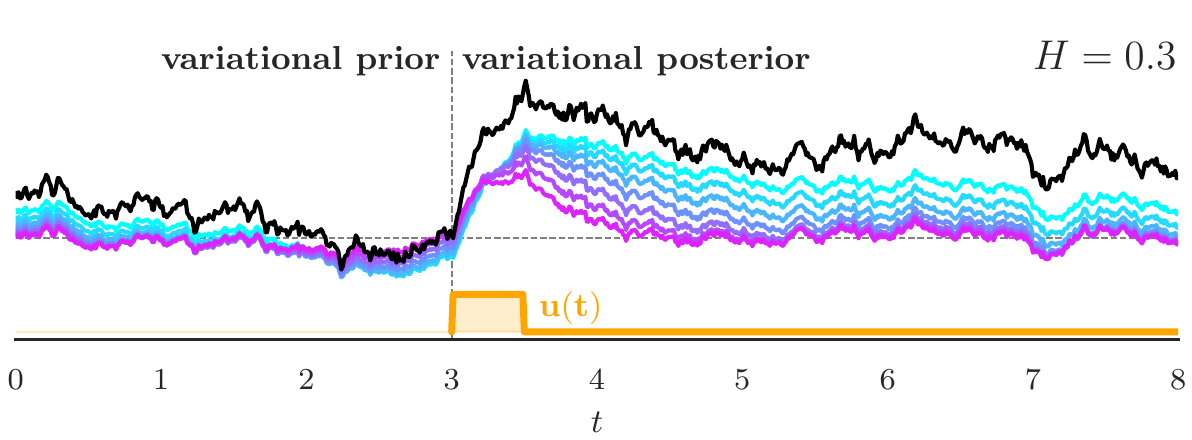}
    \end{subfigure}\hfill
    \begin{subfigure}{0.54\textwidth}
        \centering
        \includegraphics[width=\textwidth]{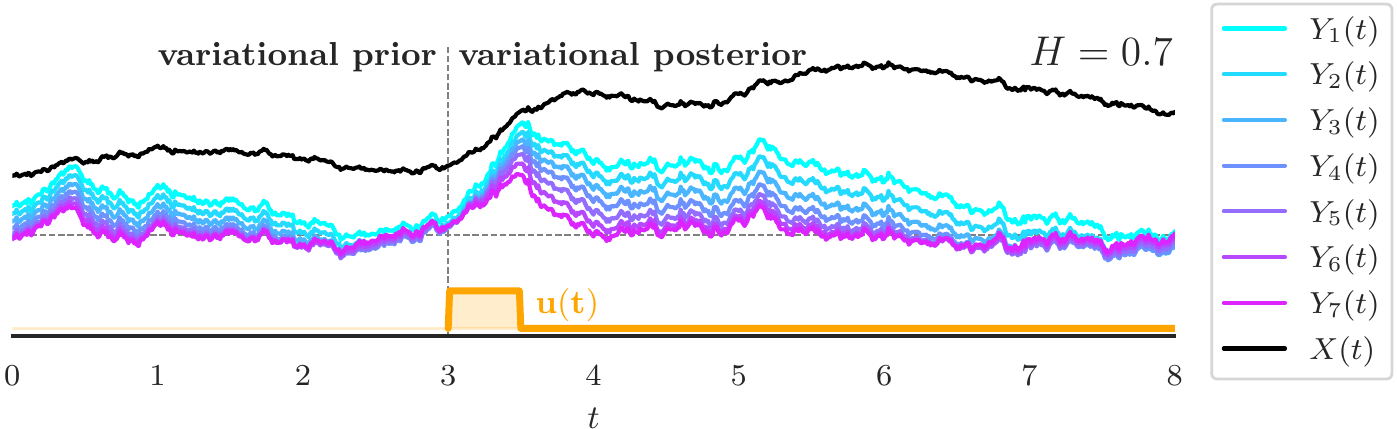}
    \end{subfigure}\hfill
    \caption{We leverage the Markov approximation, where the non-Markovian fractional Brownian motion with Hurst index $H$ is approximated by a linear combination of a finite number of Markov processes ($Y_1(t),\dots,Y_K(t)$), and propose a variational inference framework in which the posterior is steered by a control term $u(t)$. 
    Note the long-term memory behaviour of the processes, where individual $Y_k(t)$s have varying
    transient effects, from $Y_1(t)$ having the longest memory to $Y_7(t)$ the shortest, and tend to forget the action of $u(t)$ after a certain time frame.\vspace{-4mm}
    }
    \label{fig:teaser}
\end{figure}

\vspace{-2.5mm}\section{Related Work}
\label{sec:related}
\vspace{-3mm}
\paragraph{Fractional noises and neural-SDEs}
fBM~\citep{mandelbrot1968fractional} was originally used for the simulation of rough volatility in finance~\citep{gatheral2018volatility}. 
Using the Lemarié-Meyer wavelet representation,~\cite{allouche2022generative} provided a large probability bound on the deep-feedforward RELU network approximation of fBM, where up to log terms, a uniform error of $O(N^{-H})$ is achievable with $\log(N)$ hidden layers and $O(N)$ parameters. 
\cite{tong2022learning} approximated the fBM (only Type II) with sparse Gaussian processes. Unfortunately, they are limited to Euler-integration and to the case of $H>1/3$.
Their model was also not applied to videos.
Recently, \cite{yang2023neural} applied Levy driven neural-SDEs to times series prediction and \cite{hayashi2022fractional} considered neural-SDEs driven by fractional noise. Neither of those introduce a variational framework. 
Both \cite{liao2019learning,morrill2021neural} worked with \emph{rough path theory} to model long time series via rough neural-SDEs.
To the best of our knowledge, we are the firsts to devise a VI framework for neural-SDEs driven by a path-wise (strong) approximation of fBM.
\vspace{-1mm}

\paragraph{SDEs and visual understanding}
Apart from the recent video diffusion models~\citep{luo2023videofusion,yang2022diffusion,ho2022video}, SDEs for spatiotemporal visual generation is relatively unexplored. \cite{park2021vid,ali2023vidstyleode} used neural-ODEs to generate and manipulate videos while~\citep{rempe2020caspr} used neural-ODEs for temporal 3D point cloud modeling. SDENet~\citep{kong2020sde} and MDSDE-Net~\citep{zhang2023milstein} learned drift and diffusion networks for uncertainty estimation of images using out-of-distribution data. \cite{tong2022learning} used approximate-fBMs in score-based diffusion modeling for image generation. \cite{gordon2021latent} briefly evaluated different neural temporal models for video generation. 
While~\cite{babaeizadeh2018stochastic} used VI for video prediction, they did not employ SDEs. To the best of our knowledge, we are the firsts to use neural-SDEs in a variational framework for video understanding.

\section{Background}
\label{sec:bg}
We first tailor and make accessible the fractional Brownian Motion (fBM) and its relatively less explored Markov approximations for the learning community. We then describe the SDEs driven by fBM and its approximation before delving into the inference. We leave the proofs to our appendix.
 
\subsection{Fractional Brownian Motion (fBM) \& Its Markov Approximation}
\label{sec:fbm}
\begin{dfn}[Fractional Brownian Motion (Types I \& II)]
    fBM is a self-similar, non-Markovian, non-martingale, zero-mean Gaussian process $\left(\BHt\right)_{t\in[0,T]}$ for $T>0$ with a covariance of either 
    \begin{align}
        \label{eq:cov-type1}
        \E\left[\BHonet\BHones\right] &= \oneh (|t|^{2H} + |s|^{2H} - |t-s|^{2H})\qquad\qquad\qquad\qquad&\mathrm{(Type\,\,I)}\\
        \E\left[\BHtwot\BHtwos\right] &= \frac{1}{\Gamma^2(H+1/2)} \int_0^s ((t-u)(s-u))^{H-1/2} \du \qquad &\mathrm{(Type\,\,II)}
        \label{eq:cov-type2}
    \end{align}
    where $t > s$, $0<H<1$ is the \emph{Hurst index}, superscripts denote the types and $\Gamma$ is the Gamma function.
\end{dfn}
fBM recovers \emph{Brownian motion} (BM) for $H=1/2$ (regular diffusion) and generalizes it for other choices. The increments are (i) positively correlated for $H>1/2$ (super-diffusion) where the tail behaviour is infinitely heavier than that of BM, and (ii) negatively correlated for $H<1/2$ (sub-diffusion), with variance $\E\left( |\BHonet-\BHones|^2 \right)=|t-s|^{2H}$ for Type I. The Type II model implies nonstationary increments of which the marginal distributions are dependent on
the time relative to the start of the observed sample, \ie all realizations  would have to be found very close to the unconditional mean (\ie, the origin)~\citep{lim1995asymptotic,davidson2009type}.

\begin{dfn}[Integral representations of fBM]
$\BHonetwo$ admit the following integral forms due to the Mandelbrot van-Ness and Weyl representations, respectively~\citep{mandelbrot1968fractional}:
\begin{align}
\BHonet &= \frac{1}{\Gamma(H+1/2)} \int_{-\infty}^t \left[\Konets\vcentcolon= \left((t-s)^{H-1/2} - (-s)_{+}^{H-1/2}\right)\right] \dWs\\
 &= \frac{1}{\Gamma(H+1/2)} \left( \int_{-\infty}^0 \left((t-s)^{H-1/2} - (-s)^{H-1/2}\right) \dWs + \int_{0}^t (t-s)^{H-1/2} \dWs \right) \nonumber\\
\BHtwot &= \frac{1}{\Gamma(H+1/2)} \int_0^t \left[\Ktwots\vcentcolon= (t-s)^{H-1/2}\right] \dWs
\end{align}
\end{dfn}
where $\Kone$ and $\Ktwo$ are the kernels corresponding to Types I and II, respectively. 

\begin{prop}[Markov representation of fBM~\citep{harms2019affine}]\label{dfn:MarkovFBM}
    The long memory processes $\BHonetwot$ can be represented by an infinite linear combination of Markov processes, all driven by the same Wiener noise, but with different time scales, defined by speed of mean reversion $\gamma$. For both types we have representations of the form: 
    \begin{equation}\label{Markov_rep}
    B_H(t) =
    \begin{dcases}
        \int_0^\infty (Y_\gamma(t) - Y_\gamma(0) \mu(\gamma) \dgm, \quad H < 1/2, \\
        - \int_0^\infty \partial_\gamma (Y_\gamma(t) - Y_\gamma(0) \nu(\gamma) \dgm, \quad H > 1/2
    \end{dcases},
\end{equation}
where $\mu(\gamma) = \gamma^{-(H+1/2)}/\left({\Gamma(H+1/2) \Gamma(1/2-H)}\right)$ and $\nu(\gamma) = \gamma^{-(H-1/2)}/(\Gamma(H+1/2)$ $\Gamma(3/2-H))$. Note, these non--negative densities are not normalisable. To simplify notation, we will drop explicit dependency on the types $(I, II)$ in what follows.
For each $\gamma\geq 0$, and for both types $I$ and $II$, the processes $Y_\gamma(t)$ are OU processes which are solutions to the SDE $d Y_\gamma(t) = - \gamma Y_\gamma(t) \dt + \dWt$. This SDE is solved by
\begin{equation}
Y_\gamma(t) = Y_\gamma(0) e^{-\gamma t} + \int_0^t e^{-\gamma (t-s)} \dWs.
\label{eq:solved-y-sde}
\end{equation}
"Type I" and  "Type II" differ in the initial conditions $Y_\gamma(0)$. One can show that:
\begin{equation}
    Y^{(I)}_\gamma(0) = \int_{-\infty}^0 e^{\gamma s} \dWs \qquad \mathrm{and} \qquad Y^{(II)}_\gamma(0) = 0.
    \label{eq:y-0}    
\end{equation}

\end{prop}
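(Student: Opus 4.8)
The proposition really contains two assertions: (i) that the process $Y_\gamma$ solving $\diff Y_\gamma(t) = -\gamma Y_\gamma(t)\dt + \dWt$ is given by \eqref{eq:solved-y-sde} with the initial data \eqref{eq:y-0}, and (ii) that the $\gamma$-mixtures in \eqref{Markov_rep} reproduce $\BHonet$ and $\BHtwot$. Assertion (i) is the textbook variation-of-constants solution of a scalar linear SDE: applying It\^o's formula to $e^{\gamma t}Y_\gamma(t)$ gives $\diff\!\big(e^{\gamma t}Y_\gamma(t)\big)=e^{\gamma t}\dWt$, hence \eqref{eq:solved-y-sde}; for Type II one starts from $Y_\gamma(0)=0$, and for Type I one takes $Y_\gamma(0)=\int_{-\infty}^0 e^{\gamma s}\dWs$ (with respect to the same two-sided Wiener process appearing in the Mandelbrot--van Ness representation), which is exactly the value rendering $(Y_\gamma(t))_{t\ge0}$ stationary, i.e. the stochastic integral extended to $(-\infty,t]$. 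The substance is (ii), which I would reduce to a deterministic Laplace-transform identity for the fBM kernels plus a stochastic Fubini rearrangement.

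The key identity is $r^{-\beta} = \frac{1}{\Gamma(\beta)}\int_0^\infty \gamma^{\beta-1}e^{-\gamma r}\dgm$ for $r>0$, $\beta>0$. For $H<1/2$ take $\beta=\tfrac12-H$ to obtain, for $r>0$,
\[
\frac{r^{H-1/2}}{\Gamma(H+1/2)} \;=\; \int_0^\infty e^{-\gamma r}\,\mu(\gamma)\,\dgm ,
\]
and for $H>1/2$ take $\beta=\tfrac32-H$ and use $r\,e^{-\gamma r}=-\partial_\gamma e^{-\gamma r}$ to obtain
\[
\frac{r^{H-1/2}}{\Gamma(H+1/2)} \;=\; -\int_0^\infty \big(\partial_\gamma e^{-\gamma r}\big)\,\nu(\gamma)\,\dgm .
\]
Plugging the first identity with $r=t-s$ (and also with $r=-s$ for the $(-s)_+^{H-1/2}$ term of the Type I kernel $\Kone$) into the Mandelbrot--van Ness / Weyl representations of $\BHonetwot$ and interchanging the order of the $\gamma$-integral and the It\^o integral in $s$ yields, for $H<1/2$,
\[
\BHt \;=\; \int_0^\infty \mu(\gamma)\left(\int_0^t e^{-\gamma(t-s)}\dWs \;+\; \mathbf{1}_{\{\mathrm{Type\,I}\}}\,(e^{-\gamma t}-1)\!\int_{-\infty}^0 e^{\gamma s}\dWs\right)\dgm ,
\]
and by \eqref{eq:solved-y-sde}--\eqref{eq:y-0} the bracket equals $Y_\gamma(t)-Y_\gamma(0)$ (for Type II the second term is absent and $Y_\gamma(0)=0$). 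For $H>1/2$ the same substitution with the derivative identity, together with pulling $\partial_\gamma$ out of the stochastic integral, turns the $\gamma$-integrand into $-\partial_\gamma\big(Y_\gamma(t)-Y_\gamma(0)\big)$; the only extra bookkeeping is that in the Type I case $\partial_\gamma$ also acts on the $\gamma$-dependence inside $Y_\gamma(0)=\int_{-\infty}^0 e^{\gamma s}\dWs$, and one checks the extra contributions $-t e^{-\gamma t}\!\int_{-\infty}^0 e^{\gamma s}\dWs + (e^{-\gamma t}-1)\!\int_{-\infty}^0 s\,e^{\gamma s}\dWs$ reassemble, after the $s$-integration, precisely the $-(-s)^{H-1/2}$ piece of $\Kone$.

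The main obstacle is making the interchanges rigorous: as noted after \eqref{Markov_rep}, the densities $\mu,\nu$ are \emph{not} integrable at $\gamma=\infty$, so neither stochastic Fubini nor differentiation under the integral follows from absolute integrability of the kernels alone. The remedy is to argue in $L^2(\Omega)$ and use the decay of the process: a direct computation of the OU covariance gives $\E\big[(Y_\gamma(t)-Y_\gamma(0))^2\big]=O(\gamma^{-1})$ as $\gamma\to\infty$, while the $Y_\gamma(0)$-subtraction keeps the quantity bounded as $\gamma\to0$, so that $\mu(\gamma)\,\|Y_\gamma(t)-Y_\gamma(0)\|_{L^2}=O(\gamma^{-(H+1)})$ is integrable on $(0,\infty)$; similarly $\nu(\gamma)\,\|\partial_\gamma(Y_\gamma(t)-Y_\gamma(0))\|_{L^2}$ is integrable when $H>1/2$. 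This licenses the $L^2$-stochastic Fubini step and, via dominated convergence for the $\gamma$-difference quotients, the differentiation under the integral. With the interchanges in hand the claimed equalities are just a matter of matching centered Gaussian processes: both sides of \eqref{Markov_rep} are Gaussian, and It\^o's isometry reduces the equality of covariances to the deterministic kernel identities above. The degenerate endpoints --- integrability of $\int_0^t (t-s)^{2H-1}\diff s$ near $s=t$ (which needs $H>0$) and of $\int_{-\infty}^0 e^{2\gamma s}\diff s$ --- are routine.
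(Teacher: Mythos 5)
The paper itself contains no proof of this proposition: it is imported by citation from \citep{harms2019affine}, and the appendix only verifies auxiliary facts (e.g.\ the stationary covariance $\E[Y_i(0)Y_j(0)]=1/(\gamma_i+\gamma_j)$ used for sampling the Type I initial conditions). Your proposal therefore supplies an actual proof where the paper has only a reference, and it follows essentially the same route as the cited literature \citep{carmona1998fractional,harms2019affine}: write the fractional kernel as a Laplace (completely monotone) mixture, $r^{H-1/2}/\Gamma(H+1/2)=\int_0^\infty e^{-\gamma r}\mu(\gamma)\,\mathrm{d}\gamma$ for $H<1/2$ and the $-\partial_\gamma$ variant with $\nu$ for $H>1/2$, substitute into the Mandelbrot--van Ness/Weyl integrals, and interchange the $\gamma$- and It\^o integrals by an $L^2$ stochastic Fubini argument; your identification of the Type I bracket with $Y_\gamma(t)-Y_\gamma(0)$, and of the extra $\partial_\gamma Y_\gamma(0)$ contributions reassembling the $-(-s)^{H-1/2}$ piece of $\Kone$ for Type I with $H>1/2$, checks out. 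Two points to tighten. First, the bound $\mu(\gamma)\,\|Y_\gamma(t)-Y_\gamma(0)\|_{L^2}=O(\gamma^{-(H+1)})$ is only the large-$\gamma$ behaviour; near $\gamma=0$ the product is $O(\gamma^{-(H+1/2)})$ (resp.\ $O(\gamma^{-(H-1/2)})$ in the $\nu$ case), and it is precisely there that $H<1/2$ (resp.\ $H<1$) is used for integrability --- your verbal reasoning covers this, but the displayed exponent should not be claimed uniformly on $(0,\infty)$. Second, the closing appeal to ``matching centered Gaussian processes / equality of covariances'' is the wrong invariant: the proposition asserts a pathwise identity with respect to the \emph{same} Wiener noise, and equality of covariance functions only gives equality in law. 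Either conclude directly from the Fubini interchange, which already yields a.s.\ equality for each fixed $t$, or show $\E\bigl[(B_H(t)-\int_0^\infty(\cdots)\,\mathrm{d}\gamma)^2\bigr]=0$, which requires the cross-covariance term computed by It\^o isometry (your kernel identities do provide it), not merely equal marginal covariances.
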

\begin{dfn}[Markov approximation of fBM (MA-fBM)]\label{remark:markov}
\Cref{Markov_rep} suggests that $B_H(t)$ could be well approximated by a Markov process $\hat{B}_H(t)$ by $(i)$ truncating the integrals at finite $\gamma$ values $(\gamma_1...\gamma_K)$ and $(ii)$ approximating the integral by a numerical quadrature as a finite linear combination involving quadrature points and weights $\{\omega_k\}$. Changing the notation $Y_{\gamma_k}(t) \to Y_k(t)$:
\begin{equation}
\label{eq:v}
B_H(t) \approx \hat{B}_H(t) \equiv \sum_{k=1}^K \omega_k \left(Y_k(t) - Y_k(0)\right),
\end{equation}
where for fixed $\gamma_k$ the choice of $\omega_k$ depends on $H$ and the choice of "Type I" or "Type II".  For "Type II", we set
$Y_k(0) = 0$.
Since $Y_k(t)$ is normally distributed~\cite[Thm. 2.16]{harms2019affine} and can be assumed stationary for "Type I",  
we can simply sample $\left(Y_1^{(I}(0), \dots, Y_K^{(I)}(0)\right)$ from a normal distribution with mean $\bm{0}$ and covariance $\mC_{i,j} = {1}/{(\gamma_i + \gamma_j)}$ (see~\cref{eq:cov-y0}).
\end{dfn}
This strong approximation provably bounds the sample paths:
\begin{thm}[\cite{alfonsi2021approximation}]\label{thm:alfonsi}
    For rough kernels ($H<1/2$) and $\{\omega_k\}$ following a Gaussian quadrature rule, there exists a constant $c$ per every $t\in(0,T)$ such that:
    \begin{equation}
        \E|B_H^{(II)}(t) - \hat{B}_H^{(II)}(t)| \leq O(K^{-cH}), \quad \mathrm{where} \quad 1<c\leq 2,
    \end{equation}
    as $K\to\infty$. Note that, in our setting, $B_H^{(II)}(0)=\hat{B}_H^{(II)}(0)=0$. %
\end{thm}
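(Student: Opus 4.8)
The plan is to reduce the claim to a deterministic $L^2$-approximation estimate for the fractional kernel, and then to invoke the Gauss-quadrature error bound of~\cite{alfonsi2021approximation}. Since $Y_k(0)=0$ for Type II, \cref{eq:solved-y-sde} gives $Y_k(t)=\int_0^t e^{-\gamma_k(t-s)}\dWs$, so by \cref{eq:v},
\begin{equation}
\hat B_H^{(II)}(t)=\sum_{k=1}^K\omega_k Y_k(t)=\int_0^t\Big(\sum_{k=1}^K\omega_k e^{-\gamma_k(t-s)}\Big)\dWs,
\end{equation}
while the Weyl (integral) representation gives $B_H^{(II)}(t)=\tfrac{1}{\Gamma(H+1/2)}\int_0^t\Ktwots\,\dWs$ with $\Ktwots=(t-s)^{H-1/2}$. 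Subtracting, $B_H^{(II)}(t)-\hat B_H^{(II)}(t)$ is a single Wiener integral, hence a centred Gaussian variable. Using $\E|Z|=\sqrt{2/\pi}\,(\Var Z)^{1/2}$ for centred Gaussian $Z$, the It\^o isometry, and the substitution $u=t-s$,
\begin{equation}
\E\big|B_H^{(II)}(t)-\hat B_H^{(II)}(t)\big|=\sqrt{\tfrac2\pi}\Big(\int_0^t\Big(\tfrac{u^{H-1/2}}{\Gamma(H+1/2)}-\sum_{k=1}^K\omega_k e^{-\gamma_k u}\Big)^2\du\Big)^{1/2},
\end{equation}
so it suffices to show that this $L^2([0,t])$-distance between the fractional kernel and the exponential sum is $O(K^{-cH})$.

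Next I would use that the fractional kernel is the Laplace transform of the density $\mu$ from \cref{dfn:MarkovFBM}, i.e. $\tfrac{u^{H-1/2}}{\Gamma(H+1/2)}=\int_0^\infty e^{-\gamma u}\mu(\gamma)\dgm$. Then, for each fixed $u$, the integrand above is exactly the error committed by the $K$-point rule $(\gamma_k,\omega_k)$ applied to $\gamma\mapsto e^{-\gamma u}$ against the (infinite-mass) measure $\mu(\gamma)\dgm$. Choosing the $\gamma_k$ as the zeros of the degree-$K$ orthogonal polynomial and the $\omega_k$ as the Christoffel numbers for a suitably transformed finite reference weight (a Jacobi-type weight whose exponents are fixed by $H$, after mapping $(0,\infty)$ to a bounded interval), the rule is positive and exact on polynomials of degree $\le 2K-1$; the classical Gauss-quadrature error representation then bounds the pointwise error through the smoothness of $e^{-\gamma u}$ in the transformed variable, with the $u$-dependence entering only through explicit powers of $u$.

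The technical heart — carried out in~\cite{alfonsi2021approximation} — is to control the two singular regimes: the non-integrable mass of $\mu(\gamma)\dgm$ as $\gamma\to\infty$, and the blow-up of $u^{H-1/2}$ as $u\downarrow0$, which is precisely where the kernel and its finite approximation disagree most (the approximation is finite on the diagonal while the kernel is not). The standard route is to truncate the $\gamma$-integral at a level $\gamma_{\max}(K)\to\infty$, bound the discarded tail $\int_{\gamma_{\max}}^\infty e^{-\gamma u}\mu(\gamma)\dgm$ directly, apply the Gauss estimate above on the truncated range, integrate the squared pointwise error over $u\in[0,t]$ (the arising $\int_0^t u^{2H-1}\du$-type terms are finite because $H>0$), and finally optimise over $\gamma_{\max}(K)$. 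Balancing the truncation error against the quadrature error is what yields the exponent $cH$ with $1<c\le2$, uniformly on compact time intervals with a $t$-dependent constant; the endpoint $B_H^{(II)}(0)=\hat B_H^{(II)}(0)=0$ is trivial. I expect this last balancing step — rather than the probabilistic reduction, which is routine — to be the main obstacle, since it requires sharp control of Gauss--Jacobi quadrature against a weight whose exponents degenerate as $H\to0$.
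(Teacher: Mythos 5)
The paper does not actually prove this statement: \cref{thm:alfonsi} is imported verbatim by citation from \cite{alfonsi2021approximation}, so there is no internal proof to compare against. Judged on its own terms, your probabilistic reduction is correct and cleanly done: for Type II, $Y_k(0)=0$ gives $\hat B_H^{(II)}(t)=\int_0^t\bigl(\sum_k\omega_k e^{-\gamma_k(t-s)}\bigr)\dWs$, the difference with the Weyl representation is a Wiener integral of a deterministic kernel, hence centred Gaussian, and the identity $\E|Z|=\sqrt{2/\pi}\,(\Var Z)^{1/2}$ together with the It\^o isometry and the Laplace-transform identity $u^{H-1/2}/\Gamma(H+1/2)=\int_0^\infty e^{-\gamma u}\mu(\gamma)\dgm$ (valid precisely for $H<1/2$) reduces everything to an $L^2([0,t])$ approximation of the fractional kernel by an exponential sum, i.e.\ to a quadrature-error estimate for the measure $\mu(\gamma)\dgm$. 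This is indeed the standard route and the same reduction underlying the cited work.

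Where your proposal stops short is the quantitative heart: the claimed rate $O(K^{-cH})$ with $1<c\le 2$ is asserted via ``truncate, apply the Gauss estimate, balance'', but none of that balancing is carried out, so the exponent is not derived — you explicitly defer it to \cite{alfonsi2021approximation}, which is exactly what the paper itself does. Two sketch-level details are also slightly off relative to that reference: the quadrature there is piecewise Gaussian on a geometric partition of the $\gamma$-axis (with a fixed number of nodes per cell plus a tail truncation), not a single global Gauss--Jacobi rule obtained by mapping $(0,\infty)$ to a bounded interval; and the resulting bound is naturally a $t$-dependent constant times $K^{-cH}$, matching the somewhat loosely worded ``constant $c$ per every $t$'' in the statement. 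So: correct reduction, correct identification of the hard step, but as a proof of the stated rate it is incomplete without reproducing the quadrature/truncation analysis of the cited paper.
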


In the literature, different choices of $\gamma_k$ and $\omega_k$ have been proposed \citep{harms2019affine,carmona1998fractional,carmona2000approximation} and for certain choices, it is possible to obtain a superpolynomial rate, as shown by~\cite{bayer2023markovian} for the Type II case.
As we will show in~\cref{sec:omega}, choosing $\gamma_k = r^{k-n}, k=1,\ldots,K$ with $n=(K+1)/2$~\citep{carmona1998fractional}, we will optimise $\{\omega_k\}_k$ for both types, to get optimal rates.

\subsection{SDEs driven by (fractional) BM}
\begin{dfn}[SDEs driven by BM (BMSDE)]\label{dfn:BMSDE}
    A common generative model for stochastic dynamical systems considers a set of observational data ${\cal{D}} = \{O_1,\ldots, O_N\}$, 
where the $O_i$ are generated 
(conditionally) independent at random at discrete times $t_i$ with a likelihood $p_\theta\left(O_{i} \mid X(t_i)\right)$. The prior 
information about the unobserved path $\{X(t); t\in [0, \; T] \} $ of the latent process $X(t) \in \R^M$ is given by the assumption that
$X(t)$ fulfils the SDE: 
\begin{equation}\label{prior_process_Wiener}
    \mathrm{~d} X(t) = b_\theta \left(X(t), t\right) \dt +\sigma_\theta\left(X(t), t\right) \mathrm{d} W(t)
\end{equation}
The \textit{drift function} $b_\theta \left(X, t\right)\in \R^D$  models the deterministic part of the change $\mathrm{~d} X(t)$ of the state variable 
$X(t)$ during the infinitesimal time interval $\dt$, whereas the \textit{diffusion matrix} $\sigma_\theta\left(X(t), t\right) \in \R^{D\times D}$ 
(assumed to be symmetric and non--singular, for simplicity) encodes
the strength of the added Gaussian \textit{white noise} process, where $\mathrm{d} W(t) \in \R^D$ is the infinitesimal increment of a vector of independent Wiener processes during $\dt$. 
\end{dfn}

\begin{dfn}[SDEs driven by fBM (fBMSDE)]\label{dfn:fBMSDE}
    \cref{dfn:BMSDE} can be formally extended to the case of fractional Brownian motion replacing $\dWt$ by $\dBHt$~\citep{guerra2008stochastic}:
\begin{equation}\label{eq:prior_process_frac}
    \mathrm{~d} X(t) = b_\theta \left(X(t), t\right) \dt +\sigma_\theta\left(X(t), t\right) \mathrm{d} \BHt.
\end{equation}
\end{dfn}

\begin{remark}
    Care must be taken in a proper definition of the diffusion part in the fBMSDE \cref{eq:prior_process_frac} and in developing
appropriate numerical integrators for simulations, when the diffusion 
$\sigma_\theta\left(X(t), t\right)$ explicitly depends on the state $X(t)$. Corresponding stochastic integrals of the Itô type cannot be applied when $H < 1/2$ and other approaches (which are generalisations of the Stratonovich SDE for $H=\frac{1}{2}$) are necessary~\citep{lysy2013statistical}.
\end{remark}

\section{Method}\label{sec:method}
Our goal is to extend variational inference (VI)~\cite{bishop2006pattern} to the case where the Wiener process in~\cref{prior_process_Wiener} is replaced by an fBM as in~\cref{dfn:fBMSDE}. 
Unfortunately, the processes defined by \cref{eq:prior_process_frac} are not Markovian preventing us from resorting to the standard Girsanov change of measure approach known for "ordinary" SDE to compute KL--divergences and ELBO functionals needed for VI~\citep{opper2019variational}. While~\cite{tong2022learning} leverage sparse approximations for Gaussian processes, this makes $B_H$ \textit{conditioned} on a finite but larger number of so--called \emph{inducing variables}. %
We take a completely different and conceptually simple approach to VI for fBMSDE based on the exact representation of $B_H(t)$ given in~\cref{dfn:MarkovFBM}.
To this end, we first show how the \emph{strong} Markov-approximation in~\cref{remark:markov} can be used to approximate an SDE driven by fBM, before delving into the VI for the \emph{Markov-Approximate fBMSDE}.
\begin{dfn}[Markov-Approximate fBMSDE (MA-fBMSDE)]\label{dfn:MAfBMSDE}
    Substituting the fBM, $\BHt$, in~\cref{dfn:fBMSDE} by the finite linear combination of OU-processes $\hat{B}_H(t)$, we define MA-fBMSDE as:
    \begin{equation}\label{eq:MAfBMSDE}
     \dXt  = b_\theta \left(X(t), t\right) dt +\sigma_\theta\left(X(t), t\right) \dMABHt,
\end{equation}
where $\dMABHt = \sum_{k=1}^K \omega_k \dYkt$ with $\dYkt = - \gamma Y_k (t) dt + \dWt$ (\cf~\cref{remark:markov}).
\end{dfn}

\begin{prop}
    $X(t)$ can be augmented by the finite number of Markov processes $Y_k(t)$ (approximating $\BHt$) to a higher dimensional state variable of the form $Z(t) \doteq \left(X(t), Y_1(t), \ldots Y_K(t)\right) \in R^{D(K+1)}$, such that the joint process of the augmented system becomes Markovian and can be described by an 'ordinary' SDE:
    \begin{equation}\label{eq:prior_process}
    \dZt = h_\theta \left(Z(t), t\right) \dt +\Sigma_\theta\left(Z(t), t\right) \dWt,
\end{equation}
where the augmented drift vector $h_\theta \in R^{D \times(K+1)}$ and the diffusion matrix $\Sigma_\theta \left(Z, t\right)\in \R^{D (K+1) \times D}$ are given by
\begin{equation}\label{eq:hSigma}
h_\theta \left(Z, t\right)  =
\left(\begin{array}{c}
b_\theta \left(X, t\right) - \sigma_\theta\left(X, t\right)\sum_k \omega_k \gamma_k Y_k \\
- \gamma_1 Y_1\\
\ldots \\
-  \gamma_K Y_K\\
\end{array}
\right)\qquad
\Sigma_\theta \left(Z, t\right)  = \left(\begin{array}{cc@{\hspace{0.3em}}c@{\hspace{0.3em}}c}
\bar{\omega}\sigma_\theta (X ,t) \\
\vec{1} \\
     \vdots  \\
\vec{1}
\end{array}
\right),
\end{equation}
where $\vec{1} = (1,1, \ldots,1)^\top \in R^D$. We will refer to~\cref{eq:prior_process} as the \emph{variational prior}.
\end{prop}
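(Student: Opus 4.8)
The plan is to obtain \cref{eq:prior_process} by direct substitution, reading the augmented drift and diffusion off term by term. Starting from the MA-fBMSDE \cref{eq:MAfBMSDE}, I substitute $\dMABHt = \sum_{k=1}^K \omega_k \dYkt$ and then replace each increment by its defining OU dynamics $\dYkt = -\gamma_k Y_k(t)\dt + \dWt$. This yields
\begin{equation}
\dXt = \Bigl( b_\theta(X(t),t) - \sigma_\theta(X(t),t)\sum_{k=1}^K \omega_k \gamma_k Y_k(t) \Bigr)\dt + \Bigl(\textstyle\sum_{k=1}^K \omega_k\Bigr)\,\sigma_\theta(X(t),t)\,\dWt ,
\end{equation}
so that, writing $\bar\omega = \sum_k \omega_k$, the $X$-block of the drift and diffusion already coincide with the first rows of $h_\theta$ and $\Sigma_\theta$ in \cref{eq:hSigma}. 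The key observation is that, after this substitution, every coefficient is a deterministic function of the \emph{current} values $\left(X(t),Y_1(t),\dots,Y_K(t)\right)$ and of $t$ alone — the non-Markovianity of the original driving noise has been absorbed into the extra coordinates.

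Next I append the $K$ equations $\dYkt = -\gamma_k Y_k(t)\dt + \dWt$ as additional components of the state and stack $Z(t) \doteq \left(X(t),Y_1(t),\dots,Y_K(t)\right)$. Reading off the blocks gives exactly the ordinary SDE \cref{eq:prior_process} with $h_\theta,\Sigma_\theta$ as claimed: the lower blocks of $h_\theta$ are the linear mean-reversion terms $-\gamma_k Y_k$, while the lower blocks of $\Sigma_\theta$ couple every OU component to the \emph{same} Wiener process $W$ (hence the repeated $\vec 1$ rows). The initial condition is fixed by the type, as in \cref{remark:markov}: $Y_k(0)=0$ for Type II, and for Type I the $\left(Y_1(0),\dots,Y_K(0)\right)$ are drawn from the stationary Gaussian with covariance $\mC_{i,j}=1/(\gamma_i+\gamma_j)$. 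Finally, since $b_\theta,\sigma_\theta$ are assumed sufficiently regular (in the neural setting, locally Lipschitz with at most linear growth), the augmented coefficients $h_\theta(\cdot,t),\Sigma_\theta(\cdot,t)$ inherit these properties, so \cref{eq:prior_process} has a unique strong solution which, by the standard theory of Itô SDEs~\citep{oksendal2003stochastic}, is a (strong) Markov process — which is the assertion of the proposition.

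The only genuine subtlety — rather than a real obstacle — is the interpretation of the stochastic integral in the fBMSDE \cref{eq:prior_process_frac}: for state-dependent $\sigma_\theta$ and $H<1/2$ an Itô fBM integral is not available (cf.\ the remark preceding this section), so one must be careful that \cref{dfn:MAfBMSDE} and \cref{eq:prior_process} denote the same object. This is where the Markov approximation pays off: $\hat B_H$ is a finite linear combination of OU processes, hence itself an Itô process, so $\sigma_\theta(X(t),t)\,\dMABHt$ is an ordinary, well-defined Itô differential and the substitution performed above is legitimate. I would make this consistency explicit at the outset; the remainder of the argument is then the bookkeeping described above.
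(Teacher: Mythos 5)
Your proposal is correct and follows essentially the same route as the paper: substitute $\dMABHt=\sum_k\omega_k\dYkt$ with $\dYkt=-\gamma_k Y_k(t)\dt+\dWt$ into \cref{eq:MAfBMSDE}, collect the drift and diffusion of the $X$-block (with $\bar\omega=\sum_k\omega_k$), append the $K$ OU equations driven by the same Wiener process, and stack into the augmented SDE \cref{eq:prior_process} with coefficients \cref{eq:hSigma}. Your added remarks on the well-posedness of the Itô differential $\sigma_\theta\,\dMABHt$ and on regularity of the coefficients are sensible supplements but not a different argument.
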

\begin{proof}
    Each of the $D$ components of the vectors $Y_k$ use the same scalar weights $\omega_k\in \R$. Also, note that each $Y_k$ is driven by the \textit{same} vector of 
Wiener processes. Hence, we obtain the system of SDEs given by
\begin{equation}
\begin{split}
     \dXt & = b_\theta \left(X(t), t\right) \dt - \sigma_\theta\left(X(t), t\right) \sum_k \omega_k \gamma_k Y_k(t) \dt 
     + \bar{\omega}  \sigma_\theta\left(X(t), t\right) \dWt \\
      \dYkt  &  = - \gamma_k Y_k(t) \dt + \dWt \qquad \mbox{for}\qquad k=1, \ldots, K
\end{split}
\end{equation}
where $\bar{\omega} \doteq \sum_k \omega_k$. 
This system of equations can be collectively represented in terms of the augmented variable $Z(t) \vcentcolon=  \left(X(t), Y_1(t), \ldots Y_K(t)\right) \in R^{D(K+1)}$ leading to a single SDE specified by~\cref{eq:prior_process,eq:hSigma}.
\end{proof}

\Cref{eq:prior_process} represents a standard SDE driven by Wiener noise allowing us to utilise the standard tools of stochastic analysis, such as the Girsanov change of measure theorem and derive the \emph{evidence lower bounds} (ELBO) required for VI. This is what we will exactly do in the sequel.

\begin{prop}[Controlled MA-fBMSDE]\label{prop:U}
The paths of~\cref{eq:prior_process} can be steered by adding a control term $u(X, Y_1,\ldots,Y_K, t) \in R^D$ that depends on all variables to be optimised, to the drift $h_\theta$ resulting in the transformed SDE, a.k.a. the \emph{variational posterior}:
\begin{equation}\label{eq:posterior_SDE}
    \mathrm{~d} \postZ(t) = \left(h_\theta\left(\postZ(t), t\right)+  \sigma_{\theta}(\postZ(t), t) u(\postZ(t), t) \right) \dt + \Sigma_\theta\left(\postZ(t), t\right) \mathrm{d} W(t)
\end{equation}
\end{prop}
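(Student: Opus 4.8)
The plan is to obtain the controlled SDE \cref{eq:posterior_SDE} as a Girsanov change of measure applied to the \emph{variational prior} \cref{eq:prior_process}. The enabling fact is that, after the Markov-isation of the fBM, \cref{eq:prior_process} is an ordinary SDE driven by the single Wiener process $W$, so we may perturb its drift freely, provided the perturbation is compatible with the diffusion coefficient $\Sigma_\theta$, i.e.\ lies pointwise in $\mathrm{range}\,\Sigma_\theta$. Under that condition the perturbed equation keeps the same diffusion as the prior, and its path law is mutually absolutely continuous with the prior's — exactly the property the subsequent ELBO derivation relies on.

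First I would fix any progressively measurable control $u(\postZ,t)$ and check that the perturbed coefficient pair $\big(h_\theta + \Sigma_\theta u,\ \Sigma_\theta\big)$ still satisfies the global Lipschitz and linear-growth conditions inherited from $b_\theta$ and $\sigma_\theta$ (for neural-network parametrisations this is a mild regularity assumption on the networks). Classical existence-and-uniqueness theory for SDEs then yields a unique strong solution $\postZ = (\postX,\tilde Y_1,\ldots,\tilde Y_K)$ of \cref{eq:posterior_SDE} on $[0,T]$, so the variational posterior is well posed. Since $\sigma_\theta$ is non-singular, $\Sigma_\theta$ has full column rank $D$, so the drift correction — which must be read as the $\R^{D(K+1)}$-vector $\Sigma_\theta u$, not as a perturbation of the $X$-block alone — does lie in $\mathrm{range}\,\Sigma_\theta$; this is precisely the admissibility hypothesis of Girsanov's theorem.

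Next comes the substantive step. I would introduce the stochastic exponential $\mathcal{E}(t) = \exp\!\big(\int_0^t u(\postZ(s),s)^\top \dWs - \oneh\int_0^t \lVert u(\postZ(s),s)\rVert^2 \diff{s}\big)$. Under a Novikov-type integrability condition on $u$, $\mathcal{E}$ is a true $\mathbb{P}$-martingale, hence $\mathrm{d}\mathbb{Q}/\mathrm{d}\mathbb{P}\big|_{\mathcal{F}_T} = \mathcal{E}(T)$ defines an equivalent measure $\mathbb{Q}$ under which $\widetilde W(t) \doteq W(t) - \int_0^t u(\postZ(s),s)\,\diff{s}$ is a Wiener process. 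Substituting $\dWt = \mathrm{d}\widetilde W(t) + u(\postZ(t),t)\,\dt$ into \cref{eq:prior_process} recovers \cref{eq:posterior_SDE}; hence the law of the controlled process is the claimed posterior, it is equivalent to the prior law, and the associated log Radon--Nikodym density is the quantity that later enters the ELBO.

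The main obstacle I expect is not the calculation but two points of rigour. The first is pinning down the block structure so that the drift correction genuinely lands in $\mathrm{range}\,\Sigma_\theta$: the control must steer $X$ and every $Y_k$ \emph{consistently through the shared Wiener noise}, not the $X$-component alone, for otherwise the perturbation would pick up a part orthogonal to $\mathrm{range}\,\Sigma_\theta$, rendering $\mathbb{P}$ and $\mathbb{Q}$ mutually singular and leaving no finite KL/ELBO. The second is upgrading $\mathcal{E}$ from a local to a true martingale when $u$ and $\sigma_\theta$ are unbounded neural networks, where a bare Novikov condition may fail and one may instead need an a-priori moment bound on $\postZ$ or a localisation/approximation argument. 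Everything else is routine stochastic-calculus bookkeeping.
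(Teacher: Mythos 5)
Your proposal is correct and follows essentially the same route as the paper: since the Markov-augmented prior \cref{eq:prior_process} is an ordinary Wiener-driven SDE, the paper likewise obtains \cref{eq:posterior_SDE} by invoking Girsanov's theorem (its version II, stated in \cref{app:girsanov2}) to shift the drift by the control through the shared noise, relying on absolute continuity of the posterior path measure with respect to the prior. Your additional care about well-posedness, the Novikov/martingale issue, and reading the correction as $\Sigma_\theta u$ acting on all blocks driven by the common Wiener process simply makes explicit the technical conditions the paper's sketch leaves implicit.
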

\begin{proof}[Sketch of the proof]
Using the fact that the posterior probability measure over paths $\postZ(t)$ $\{\postZ(t); t\in [0, \; T]\} $ is absolutely continuous w.r.t. the prior process, we apply the Girsanov theorem (\cf~\cref{app:girsanov2}) on~\cref{eq:prior_process} to write the new drift, from which the posterior SDE in~\cref{eq:posterior_SDE} is obtained.
\end{proof}

We will refer to~\cref{eq:posterior_SDE} as the \emph{variational posterior}. In what follows, we will assume a parametric form for the control function $u(\postZ(t), t) \equiv u_\phi(\postZ(t), t)$ (as \eg given by a neural network) and will devise a scheme for inferring the \emph{variational parameters $(\theta,\phi)$}, \ie variational inference.

\begin{prop}[Variational Inference for MA-fBMSDE]
    The \textit{variational parameters} $\phi$ are optimised by minimising the KL--divergence between the posterior and the prior, where the corresponding \textit{evidence lower bound} (ELBO) to be maximised is:
\begin{equation}
\label{eq:brownian-elbo}
\log p\left(O_1, O_2, \ldots, O_N \mid \theta\right) \geq 
\mathbb{E}_{\postZ_u}\left[\sum_{i=1}^N \log p_\theta\left(O_i \mid \postZ(t_i) \right)-\int_0^T \frac{1}{2}\left\|u_\phi\left(\postZ(t), t\right)\right\|^2 \mathrm{~d} t\right],
\end{equation}
where the observations $\{O_i\}$ are included by likelihoods $p_\theta\left(O_{i} \mid \postZ(t_i)\right)$ and the expectation is taken over random paths of the approximate posterior process defined by 
(\cref{eq:posterior_SDE}).
\end{prop}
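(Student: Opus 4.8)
The plan is to lift everything to the augmented state space: realise the variational prior \cref{eq:prior_process} and the controlled posterior \cref{eq:posterior_SDE} as probability measures $\mathbb{P}$ and $\mathbb{Q}$ on path space $C([0,T];\R^{D(K+1)})$ for the \emph{now Markovian} process $Z(t)=(X(t),Y_1(t),\dots,Y_K(t))$, and then run the standard variational argument there. I would write $F(Z)\vcentcolon=\sum_{i=1}^N\log p_\theta(O_i\mid X(t_i))$ for the data log-likelihood, which reads off only the first block $X$ of the path. Since the exact posterior path measure given $\mathcal{D}$ has Radon--Nikodym density proportional to $e^{F(Z)}$ with respect to $\mathbb{P}$, and $p(O_1,\dots,O_N\mid\theta)=\mathbb{E}_{\mathbb{P}}[e^{F}]$, for \emph{any} $\mathbb{Q}\ll\mathbb{P}$ one has the exact decomposition
\[
\log p(O_1,\dots,O_N\mid\theta)\;=\;\mathbb{E}_{\mathbb{Q}}[F(Z)]\;-\;\KL(\mathbb{Q}\,\|\,\mathbb{P})\;+\;\KL\!\big(\mathbb{Q}\,\|\,\mathbb{P}(\cdot\mid\mathcal{D})\big),
\]
so discarding the last, non-negative, term produces the ELBO with penalty $\KL(\mathbb{Q}\|\mathbb{P})$; equivalently this is Jensen's inequality applied to $\log\mathbb{E}_{\mathbb{P}}[e^{F}]=\log\mathbb{E}_{\mathbb{Q}}[e^{F}\,\mathrm{d}\mathbb{P}/\mathrm{d}\mathbb{Q}]$.

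It then remains to identify $\KL(\mathbb{Q}\|\mathbb{P})$ with the quadratic control cost. The SDEs \cref{eq:prior_process,eq:posterior_SDE} share the \emph{same} diffusion coefficient $\Sigma_\theta$ and their drifts differ only by the term generated by $u_\phi$, which acts as a Cameron--Martin shift of the common driving noise $W$ and hence lies in the column space of $\Sigma_\theta$; this gives $\mathbb{Q}\ll\mathbb{P}$ and, by Girsanov's theorem (\cref{app:girsanov2}),
\[
\frac{\mathrm{d}\mathbb{Q}}{\mathrm{d}\mathbb{P}}\Big|_{[0,T]}\;=\;\exp\!\left(\int_0^T u_\phi^\top\,\dWt\;-\;\tfrac12\int_0^T\|u_\phi\|^2\,\dt\right),
\]
with $u_\phi$ evaluated along the path. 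Under $\mathbb{Q}$ the shifted process $\widetilde W(t)\vcentcolon=W(t)-\int_0^t u_\phi\,\mathrm{d}s$ is a Brownian motion, so $\int_0^T u_\phi^\top\,\dWt=\int_0^T u_\phi^\top\,\mathrm{d}\widetilde W(t)+\int_0^T\|u_\phi\|^2\,\dt$, and since $\int_0^{\cdot}u_\phi^\top\,\mathrm{d}\widetilde W$ is a $\mathbb{Q}$-martingale, taking $\log$ and $\mathbb{E}_{\mathbb{Q}}$ of the display yields $\KL(\mathbb{Q}\|\mathbb{P})=\tfrac12\,\mathbb{E}_{\mathbb{Q}}\!\big[\int_0^T\|u_\phi(\postZ(t),t)\|^2\,\dt\big]$. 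Substituting back into the decomposition and noting that the expectation there is over paths of \cref{eq:posterior_SDE} gives exactly \cref{eq:brownian-elbo}.

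I expect the main obstacle to be the rigorous justification of the change of measure rather than the algebra. The diffusion matrix $\Sigma_\theta\in\R^{D(K+1)\times D}$ is tall and rank-deficient, so $\Sigma_\theta\Sigma_\theta^\top$ is singular and the textbook non-degenerate form of Girsanov does not apply directly; the argument must instead exploit that the drift perturbation is explicitly of the form $\Sigma_\theta u_\phi$ for an adapted $u_\phi$, so the measure change acts only on the $D$-dimensional common Wiener process and the degeneracy is harmless. One also needs an integrability condition — e.g.\ $u_\phi$ bounded on $[0,T]$, or a Novikov-type condition $\mathbb{E}_{\mathbb{P}}[\exp(\tfrac12\int_0^T\|u_\phi\|^2\,\dt)]<\infty$, which is reasonable for the neural-network parametrisations on a finite horizon — to ensure the exponential above is a true martingale with unit mean and that the stochastic-integral term is $\mathbb{Q}$-mean-zero. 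Finally one should check that $F$ is $\mathbb{Q}$-integrable, which follows from the Gaussian-type marginals of $\postZ$ together with mild growth of the log-likelihoods, so that the decomposition above is not of the indeterminate form $\infty-\infty$.
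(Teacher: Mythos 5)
Your proposal is correct and follows essentially the same route as the paper: the paper likewise works on the augmented Markovian system, invokes Girsanov's theorem II (\cref{app:girsanov2}) to get the Radon--Nikodym derivative and the KL term $\tfrac12\,\mathbb{E}\!\int_0^T\|u_\phi\|^2\dt$, and then appeals to the standard control-based ELBO of \cite{li2020scalable} rather than re-deriving the Jensen/decomposition step you spell out. Your additional remarks on the degenerate (tall, rank-deficient) $\Sigma_\theta$ and the Novikov-type integrability condition are exactly the details the paper leaves implicit, and they are handled correctly.
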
\vspace{-2mm}
\begin{proof}[Sketch of the proof]
    Since we can use Girsanov's theorem II~\citep{oksendal2003stochastic}, the variational bound derived in~\cite{li2020scalable} (App. 9.6.1) directly applies.
\end{proof}

\begin{remark}
It is noteworthy that the measurements with their likelihoods $p_\theta\left(O_{i} \mid \postX(t_i)\right)$ depend only on the component $\postX(t)$ of the augmented state $\postZ(t)$. The additional variables $Y_k(t)$ which are used to model the noise in the SDE are not directly observed. However, computation of the ELBO requires initial values for all state variables $\postZ(0)$ (or their distribution). Hence, we sample $Y_k(0)$ in accordance with~\cref{remark:markov}. 
\end{remark}
\subsection{Optimising the approximation}\label{sec:omega}\vspace{-1mm}
We now present the details of our novel method for optimising our approximation
$ \hat{B}^{(I,II)}_H(t)$ for $\omega_k$. 
To this end, we first follow~\cite{carmona1998fractional} and choose a geometric sequence of $\gamma_k = (r^{1-n},r^{2-n},\dots,r^{K-n}), n=\frac{K+1}{2}, r>1$. 
Rather than relying on methods of numerical quadrature, we consider a simple
measure for the quality of the approximation over a fixed time interval $[0, \; T]$ which can be \textit{optimised analytically} for both types I and II.
\begin{prop}[Optimal ${\bm{\omega}\doteq[\omega_1, \ldots, \omega_K]}$ for $\hat{B}^{(I,II)}(t)$]
The $L_2$-error of our approximation
\begin{equation}
    {\cal{E}}^{(I,II)}(\bm{\omega}) = \int_0^T \E\left[\left(\hat{B}^{(I,II)}_H(t) - \BHonetwot\right)^2\right]\dt
\end{equation}
is minimized at ${\mA^{(I,II)}} \bm{\omega} = \vb^{(I,II)}$, where
    \begin{eqnarray}
    \mA^{(I)}_{i,j} &=& \frac{2T + \frac{e^{-\gamma_i T}-1}{\gamma_i} + \frac{e^{-\gamma_j T}-1}{\gamma_j}}{\gamma_i + \gamma_j}\,, \,\,\,\, \mA^{(II)}_{i,j} = \frac{T + \frac{e^{-(\gamma_i+\gamma_j)T}-1}{\gamma_i + \gamma_j}}{\gamma_i + \gamma_j}\\
    \vb^{(I)}_k &=&
        \frac{2T}{\gamma_k^{H+1/2}}
        - \frac{T^{H+1/2}}{\gamma_k \Gamma(H+3/2)}
        + \frac{e^{-\gamma_k T} - Q(H+1/2, \gamma_k T) e^{\gamma_k T}}{\gamma_k^{H+3/2}}
    \label{eq:omega-bI}
    \\
    \vb^{(II)}_k &=&
        \frac{T}{\gamma_k^{H+1/2}} P(H + 1/2, \gamma_k T) - \frac{H+1/2}{\gamma_k^{H+3/2}} P(H+3/2, \gamma_k T).
\end{eqnarray}
$P(z, x)=\frac{1}{\Gamma(z)}\int_0^x t^{z-1}e^{-t}\dt$ is the regularized lower incomplete gamma function and $Q(z,x)=\frac{1}{\Gamma(z)}\int_x^\infty t^{z-1}e^{-t}\dt$ is the regularized upper incomplete gamma function.
\end{prop}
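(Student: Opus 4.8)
The plan is to expand the squared error $\E[(\hat B_H(t)-B_H(t))^2]$ into three pieces — $\E[\hat B_H(t)^2]$, $\E[B_H(t)^2]$, and the cross term $-2\,\E[\hat B_H(t)B_H(t)]$ — integrate each over $[0,T]$, and observe that the result is a quadratic form in $\bm\omega$: $\mathcal E(\bm\omega) = \bm\omega^\top \mA\,\bm\omega - 2\,\bm\omega^\top \vb + \text{const}$. Since $\mA$ is (up to the positive factor $1/(\gamma_i+\gamma_j)$ structure) a Gram-type matrix and hence positive definite, the minimiser is the unique solution of $\nabla_{\bm\omega}\mathcal E = 0$, i.e. $\mA\bm\omega = \vb$. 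So the whole proof reduces to computing the entries $\mA_{i,j}$ and $\vb_k$ explicitly for each type.

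First I would handle $\mA_{i,j}$, which is the easy part. Using $\hat B_H(t)=\sum_k\omega_k(Y_k(t)-Y_k(0))$, we get $\E[\hat B_H(t)^2] = \sum_{i,j}\omega_i\omega_j\,\E[(Y_i(t)-Y_i(0))(Y_j(t)-Y_j(0))]$, so $\mA_{i,j}=\int_0^T \E[(Y_i(t)-Y_i(0))(Y_j(t)-Y_j(0))]\dt$. From the explicit OU solution \cref{eq:solved-y-sde} and the Itô isometry, $\E[(Y_i(t)-Y_i(0))(Y_j(t)-Y_j(0))]$ is an elementary combination of exponentials; the Type II initial condition $Y_k(0)=0$ removes the cross terms involving $Y(0)$, while for Type I one additionally uses the stationary covariance $\E[Y_i(0)Y_j(0)]=1/(\gamma_i+\gamma_j)$ (\cref{eq:cov-y0}). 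Integrating the resulting exponentials in $t$ over $[0,T]$ gives the stated closed forms; the $T$, $(e^{-\gamma T}-1)/\gamma$, and $1/(\gamma_i+\gamma_j)$ pieces are exactly what one expects from $\int_0^T(1-e^{-\gamma t})\dt$-type integrals.

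The cross term $\vb_k=\int_0^T \E[(Y_k(t)-Y_k(0))\,B_H(t)]\dt$ is the main obstacle, because it couples the OU process to the fBM and the fBM has no finite second moment density in the naive sense — one must use the integral (Mandelbrot–van Ness / Weyl) representations of $B_H$ from \cref{dfn:MarkovFBM} and pair them, via Itô isometry over the common driving Wiener process $W$, with the representation $Y_k(t)-Y_k(0)=\int_0^t e^{-\gamma_k(t-s)}\dWs$ (plus, for Type I, the $\int_{-\infty}^0$ contribution from $Y_k(0)$). This produces integrals of the form $\int_0^t e^{-\gamma_k(t-s)}(t-s)^{H-1/2}\diff s$ and $\int_0^t(t-s)^{H-1/2}\diff s$, i.e. (after the substitution $u=\gamma_k(t-s)$) incomplete Gamma functions $P(H+1/2,\gamma_k t)$ and the like; a second integration $\int_0^T\cdots\dt$, handled by Fubini and another incomplete-Gamma identity (or integration by parts using $\frac{d}{dz}[\text{lower incomplete }\Gamma]$), yields the $P(H+1/2,\gamma_k T)$ and $P(H+3/2,\gamma_k T)$ terms for Type II, and the extra $Q(H+1/2,\gamma_k T)e^{\gamma_k T}$ term for Type I coming from the $(-\infty,0]$ part of the Mandelbrot–van Ness kernel. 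The bookkeeping with the $\Gamma(H+1/2)$ normalisation constants and the two kernel types is where care is needed; everything else is routine. Finally, positive-definiteness of $\mA$ (hence uniqueness of the minimiser) follows since $\mathcal E(\bm\omega)$ is a genuine $L^2([0,T]\times\Omega)$ squared norm of $\hat B_H - B_H$ as a function of $\bm\omega$, so its Hessian $2\mA$ is the Gram matrix of the linearly independent functions $t\mapsto Y_k(t)-Y_k(0)$ and is therefore PD.
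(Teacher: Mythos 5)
Your proposal follows essentially the same route as the paper's proof: expand $\mathcal{E}(\bm{\omega})$ into a quadratic form via the second moments $\E[\hat{B}_H(t)^2]$, $\E[B_H(t)^2]$ and the cross term, compute these with the \ito{} isometry applied to the OU solutions (with the stationary covariance $1/(\gamma_i+\gamma_j)$ for Type I) and the Mandelbrot--van Ness/Weyl representations, integrate over $[0,T]$ to obtain the incomplete-Gamma expressions, and read off $\mA\bm{\omega}=\vb$ from the stationarity condition. Your added observation that $\mA$ is a Gram matrix of the functions $t\mapsto Y_k(t)-Y_k(0)$ in $L^2([0,T]\times\Omega)$, hence positive definite and the critical point is the unique minimiser, is a small but welcome point the paper leaves implicit.
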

\begin{proof}[Sketch of the proof]
By expanding the $L_2$-error we find a tractable quadratic form of the criterion:
\begin{align}
    {\cal{E}}^{(I,II)}(\bm{\omega}) &= \int_0^T \E\left[\left(\hat{B}^{(I,II)}_H(t) - \BHonetwot\right)^2\right] \dt \\
    &=  \int_0^T \left(\expE{\hat{B}^{(I,II)}_H(t)^2} + \expE{\BHonetwot^2} - 2 \expE{\hat{B}^{(I,II)}_H(t) \BHonetwot} \right) \dt \nonumber\\
    &= \bm{\omega}^T \mA^{(I,II)} \bm{\omega} - 2 {\vb^{(I,II)}}^T \bm{\omega} + \mathrm{const},\nonumber
\end{align}
whose non-trivial minimum is attained as the solution to the system of equations ${\mA^{(I,II)}}\bm{\omega} = \vb^{(I,II)}$.
We refer the reader to \cref{app:optimized-omega} for the full proof and derivation. 
\end{proof}

\begin{figure}[t]
    \centering
    \begin{subfigure}{0.24\textwidth}
        \centering
        \includegraphics[width=\textwidth,trim=6 8 7 7,clip]{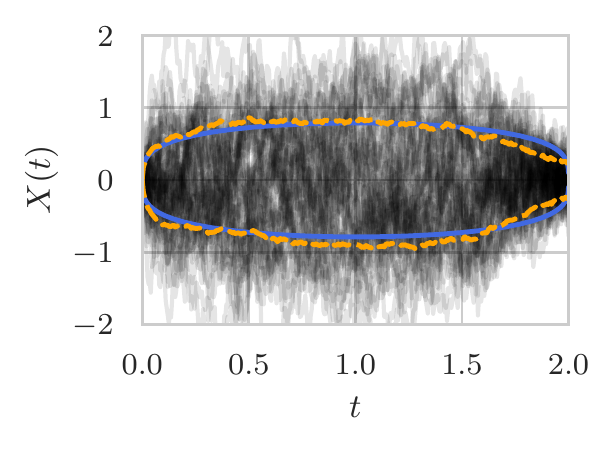}
        \caption{$H = 0.3, \theta=0.0$}
    \end{subfigure}\hfill
    \begin{subfigure}{0.24\textwidth}
        \centering
        \includegraphics[width=\textwidth,trim=6 8 7 7,clip]{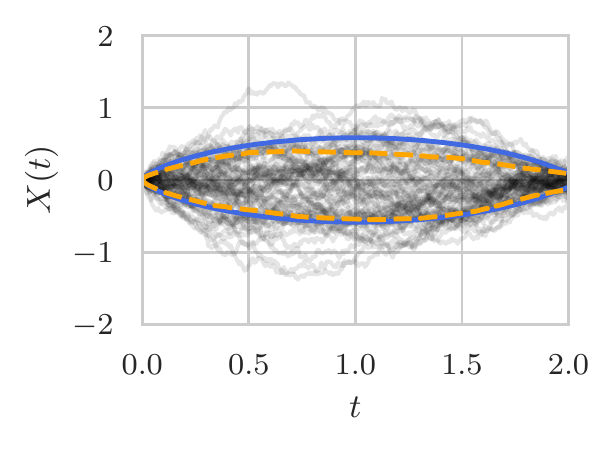}
        \caption{$H = 0.7, \theta=0.0$}
    \end{subfigure}\hfill
    \begin{subfigure}{0.24\textwidth}
        \centering
        \includegraphics[width=\textwidth,trim=6 8 7 7,clip]{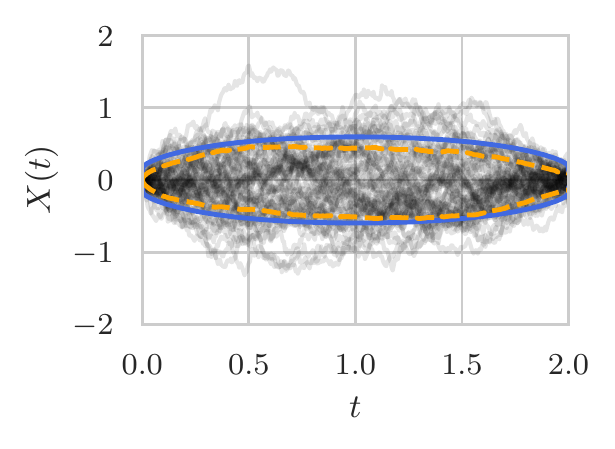}
        \caption{$H = 0.6, \theta=1.0$}
    \end{subfigure}\hfill
    \begin{subfigure}{0.24\textwidth}
        \centering
        \includegraphics[width=\textwidth,trim=6 8 7 7,clip]{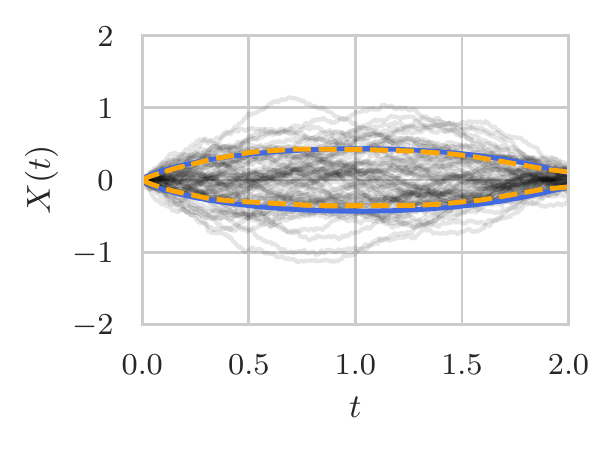}
        \caption{$H = 0.8, \theta=1.0$}
    \end{subfigure}
    \caption{The true variance (blue) of a fOU bridge matches the empirical variance (dashed orange) of our trained models.
    The transparent black lines are the sampled approximate posterior paths used to calculate the empirical variance.\vspace{-3mm}
    }
    \label{fig:bridge}
\end{figure}

\section{Experiments}\label{sec:exp}\vspace{-1mm}
We implemented our method in JAX~\citep{jax2018github}, using Diffrax~\citep{kidger2021on} for
SDE solvers, Optax~\citep{deepmind2020jax} for optimization, Diffrax~\citep{deepmind2020jax} for distributions and Flax~\citep{flax2020github} for neural networks.
Unlike~\cite{tong2022learning} our approach is agnostic to discretization and the choice of the solver. Hence, in all experiments we can use the \emph{Stratonovich--Milstein} solver,
\cf~\cref{app:models} for more details.

\paragraph{Recovering the fractional Ornstein--Uhlenbeck bridge}
Applying our method on linear problems, allows comparing empirical results to
analytical formulations derived \eg using Gaussian process methodology~\cite{rasmussen2006gaussian}.
We begin by assessing the reconstruction capability of our method on a fractional Ornstein--Uhlenbeck (fOU) bridge, that is an OU--process driven by fBM: $\diff X(t)=-\theta X(t) \dt+\diff B_H$, starting at $X(0) = 0$ and conditioned to end at $X(T) = 0$.
Following the rules of Gaussian process regression~\citep[Eq. 2.24]{rasmussen2006gaussian}, we have an analytical expression for the posterior covariance:
\begin{equation}
    \label{eq:posterior-bridge}
    \expE{\tilde{X}(t)^2} = K(t, t) - 
    \begin{bmatrix}
        K(t,0)&K(t,T)
    \end{bmatrix}
    \begin{bmatrix}
        K(0,0)&K(T,0)\\K(0,T)&K(T,T)+\sigma^2
    \end{bmatrix}^{-1}
    \begin{bmatrix}
        K(0,t) \\ K(T,t)
    \end{bmatrix}
\end{equation}
where $K(t,\tau)$ is the prior kernel and the observation noise is $0$ for $X(0)$ and $\sigma$ for $X(T)$.
If $\theta=0$, $K(t,\tau) = \E\left[B_H(t)B_H(\tau)\right]$ (\cref{eq:cov-type1}) and if $\theta > 0$ and $H > 1/2$, the kernel admits the following form~\citep[Appendix A]{lysy2013statistical}:
\begin{align}
    \label{eq:lysy}
    K(t,\tau)=\frac{(2H^2-H)}{2 \theta} \Biggl(e^{-\theta|t-\tau|}\left[\frac{ \Gamma(2 H-1)+\Gamma(2 H-1,|t-\tau|)}{\theta^{2 H-1}}
    + \int_0^{|t-\tau|} e^{\theta u} u^{2 H-2} \mathrm{~d} u\right]\Biggr)
\end{align}
where $\Gamma(z,x)=\int_x^\infty t^{z-1}e{-t} \dt$ is the upper incomplete Gamma function.
This allows us to compare the true posterior variance with the empirical variance of a model that is trained by maximizing the ELBO. %
for a data point $X(T)=0$. As this is equivalent to the analytical result (\cref{eq:posterior-bridge}),
we can compare the variances over time. As plotted in~\cref{fig:bridge}, for various $H$ and $\theta$ values, our VI can correctly recover the posterior variance,
\cf~\cref{app:experiments} for additional results.

\begin{wrapfigure}[11]{r}{0.3\linewidth}
\vspace{-6mm}
    \centering
    \includegraphics[width=\linewidth]{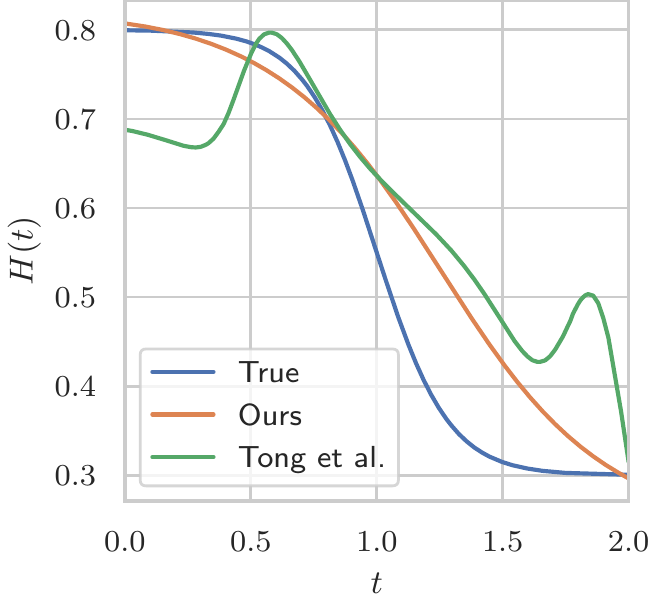}\vspace{-3mm}
    \caption{Estimating time--dependent $H(t)$ from data.}
    \label{fig:mbm}
\end{wrapfigure}
\vspace{1mm}\noindent\textbf{Estimating time-dependent Hurst index.\,}
Since our method of optimizing $\omega_k$ is tractable and differentiable,
we can directly optimize a parameterized $H$ by maximizing the ELBO.
Also a time--dependent Hurst index $H(t)$ can be modelled, leading to multifractional Brownian Motion~\citep{peltier1995multifractional}.
We directly compare with a toy problem presented in \cite[Sec. 5.2]{tong2022learning}.
We use the same model for $H(t)$, a neural network with one hidden layer of $10$ neurons and activation function tanh, and a final sigmoid activation, and the same input $[\sin(t), \cos(t), t]$.
We use $\hat{B}_H^{(II)}$ since their method is Type II.
\Cref{fig:mbm} shows a reasonable estimation of $H(t)$, which is more accurate than
the result from~\cite{tong2022learning}, \cf\cref{app:models} for more details.

\begin{figure}
    \centering
    \includegraphics[width=\textwidth, trim=20 0 80 0, clip]{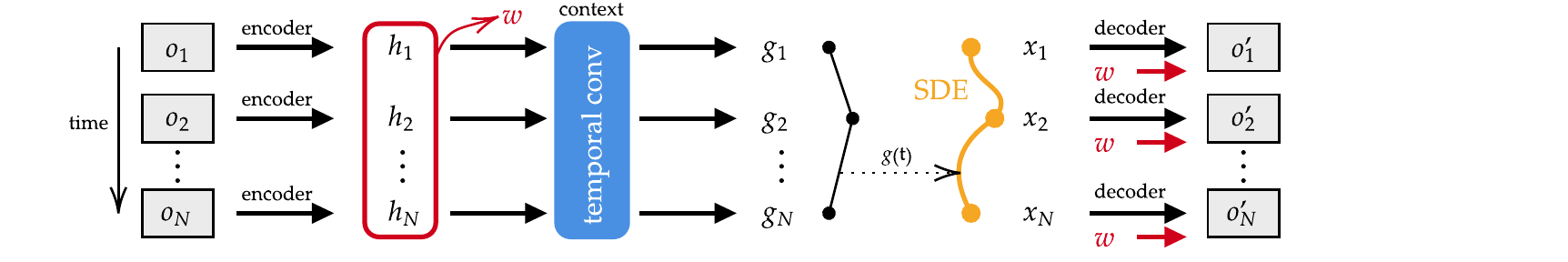}
    \caption{Schematic of the latent SDE video model.
    Video frames $\{o_i\}_i$ are encoded to vectors $\{h_i\}_i$.
    The static content vector $w$, that is free of the dynamic information, is inferred from $\{h_i\}_i$.
    The context model processes the information with temporal convolution layers, so that its outputs $\{g_i\}_i$ contain information from neighbouring frames.
    A linear interpolation on $\{g_i\}_i$ allows the posterior SDE model to receive time--appropriate information $g(t)$, at (intermediate) time--steps chosen by the SDE solver.
    Finally, the states $\{x_i\}_i$ and static $w$ are decoded to reconstruct frames $\{o'_i\}_i$. \vspace{-3mm}
    }
    \label{fig:video-scheme}
\end{figure}

\vspace{1mm}\noindent\textbf{Latent video models}
We apply variatonal inference for MA-fBMSDE on latent neural-SDE video modelling.
See \cref{fig:video-scheme} for a schematic explanation of our model.
We refer to \cref{app:models} for a detailed 

\begin{wraptable}[8]{r}{46mm}
\small
    \caption{Mean PSNR in video prediction. \vspace{-2mm}}
  \label{table:results}
  \centering
  \begin{tabular}{l|cc}
    Model & ELBO & PSNR \\
    \midrule
    SVG & N/A & $14.50$ \\
    SLRVP & N/A & ${16.93}$ \\
    BM & ${-913.60}$ & $14.90$ \\
    MA-fBM & $-608.00$ & $15.30$ \\
  \end{tabular}
\end{wraptable}
explanation of submodel architectures and hyperparameters.
Our latent video model is trained on Stochastic Moving MNIST~\citep{denton2018stochastic},
a video dataset where two MNIST numbers move on a canvas and bounce off the edge in random directions. We compare our model driven by MA-fBM to a baseline model driven by BM. For the baseline we set $K=1$, $\gamma_1=0$ and $\omega_1$ = 1, which is identical to white Brownian motion.
This allows us to study only the impact of the driving noise, uneffected by other desig choices.

As shown in~\cref{table:results}, our MA-fBM driven model is on par with closely related discrete-time methods such as SVG~\citep{denton2018stochastic} or SLRVP~\cite{franceschi2020stochastic}, in terms of PSNR. The Hurst index was optimized during training, and reached
$H=0.90$ at convergence (long-term memory).
The MA-fBM model achieves higher ELBO and PSNR on the test set compared to the BM version, indicating the added degree of freedom of the Hurst index benefits the model, and MA-fBM with $H=0.90$ is better suited to the data than BM.~\Cref{fig:posterior-smmnist} shows reconstructed posterior samples for the BM and MA-fBM models, conditioned on the same data.
Using the learned prior SDE as a generative model,~\cref{fig:prior-smmnist} reveals that our MA-fBMSDE better captures the data diversity.
\begin{figure}[h]
    \tiny
    \raggedleft
    Ground truth \quad
    $\vcenter{\hbox{\includegraphics[width=0.90\textwidth,trim=0 0 0 0,clip] {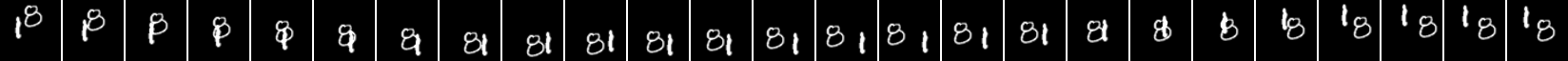}}}$
    BM \quad
    $\vcenter{\hbox{\includegraphics[width=0.90\textwidth,trim=0 0 0 0,clip]{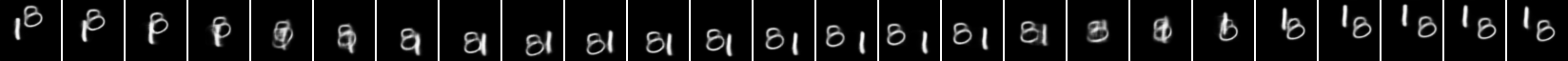}}}$
    \linebreak MA-fBM \quad
    $\vcenter{\hbox{\includegraphics[width=0.90\textwidth,trim=0 0 0 0,clip]{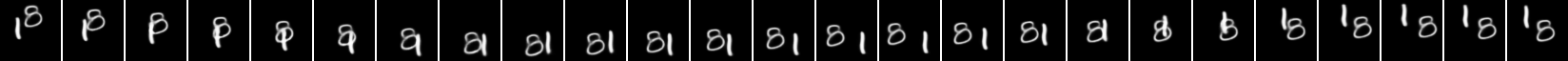}}}$
    \caption{Posterior reconstructions of a model driven by BM and a model driven by MA-fBM, conditioned on the same data ('Ground truth').\vspace{-2mm}
    }
    \label{fig:posterior-smmnist}
\end{figure}
\begin{figure}[h!]
    \tiny
    \raggedleft
    BM (1) \quad
    $\vcenter{\hbox{\includegraphics[width=0.90\textwidth]{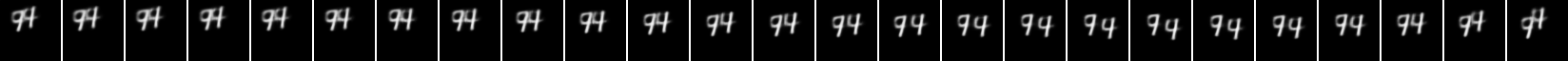}}}$
    \linebreak BM (2) \quad
    $\vcenter{\hbox{\includegraphics[width=0.90\textwidth]{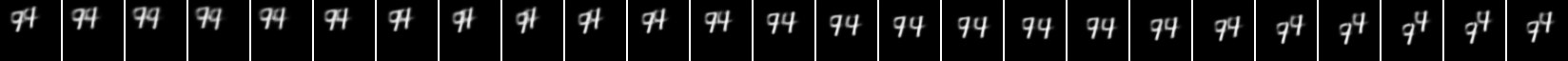}}}$
    \linebreak BM (3) \quad
    $\vcenter{\hbox{\includegraphics[width=0.90\textwidth]{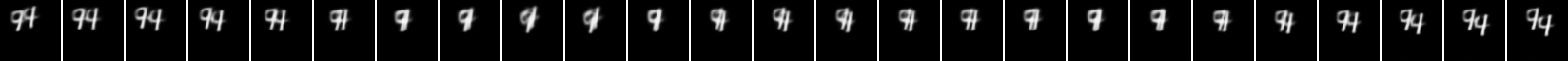}}}$
    \linebreak BM (4) \quad
    $\vcenter{\hbox{\includegraphics[width=0.90\textwidth]{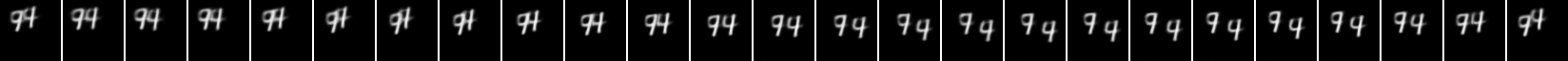}}}$
    \linebreak MA-fBM (1) \quad
    $\vcenter{\hbox{\includegraphics[width=0.90\textwidth]{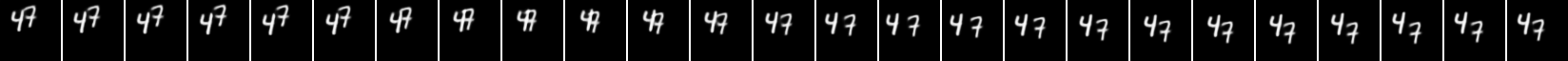}}}$
    \linebreak MA-fBM (2) \quad
    $\vcenter{\hbox{\includegraphics[width=0.90\textwidth]{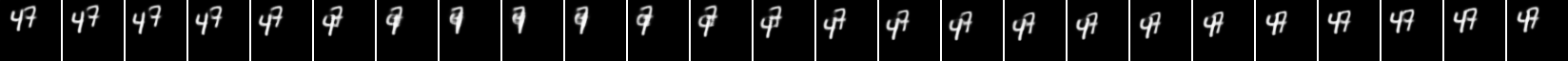}}}$
    \linebreak MA-fBM (3) \quad
    $\vcenter{\hbox{\includegraphics[width=0.90\textwidth]{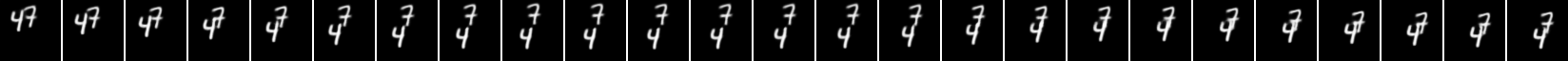}}}$
    \linebreak MA-fBM (4) \quad
    $\vcenter{\hbox{\includegraphics[width=0.90\textwidth]{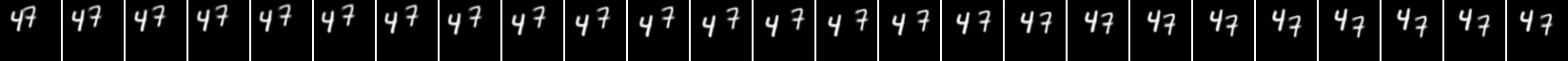}}}$
    \caption{Stochastic predictions using the trained prior of a model driven by BM and a model driven by MA-fBM, where the initial state is conditioned on the same data. Four samples are shown for each model.
    The MA-fBM samples show more diverse movements, thus better capturing the dynamics in the data. The BM samples are more similar, indicating a less powerful prior was learned.
    }
    \label{fig:prior-smmnist}
\end{figure}

\newpage
\subsection{Ablations \& Further Studies}
\,\\
\vspace{-15mm}
\begin{wrapfigure}[10]{r}{0.3\linewidth}
    \centering%
    \includegraphics[width=\linewidth]{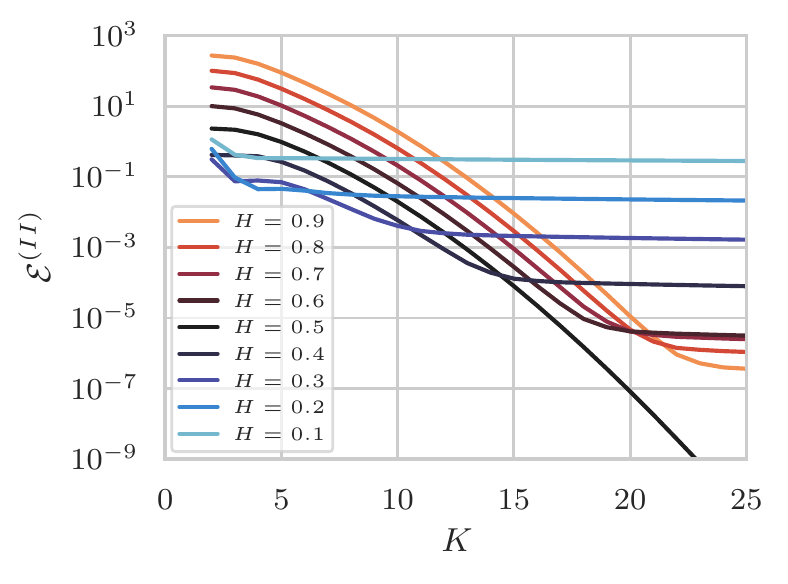}\vspace{-3mm}
    \caption{${\cal{E}}^{(II)}$ vs. $K$.}
    \label{fig:k-vs-error}
\end{wrapfigure}

\noindent\vspace{1mm}\textbf{Numerical study of the Markov approximation.\,}
By numerically evaluating the criterion ${\cal{E}}^{(II)}$ we can investigate the effect of $K$, the number of OU--processes, on the quality of the approximation.
\Cref{fig:k-vs-error} indicates that the approximation error diminishes by
increasing $K$.
However, after a certain threshold the criterion saturates, depending on $H$.
Adding more processes, especially for low $H$ brings diminishing returns.
The rapid convergence evidenced in this empirical result well agrees with the theoretical findings of~\citep{bayer2023markovian} especially for the \emph{rough processes} where $H<1/2$, as recalled in~\cref{thm:alfonsi}.

\begin{wrapfigure}[12]{r}{0.3\linewidth}
    \centering
    \vspace{-4mm}
    \includegraphics[width=\linewidth]{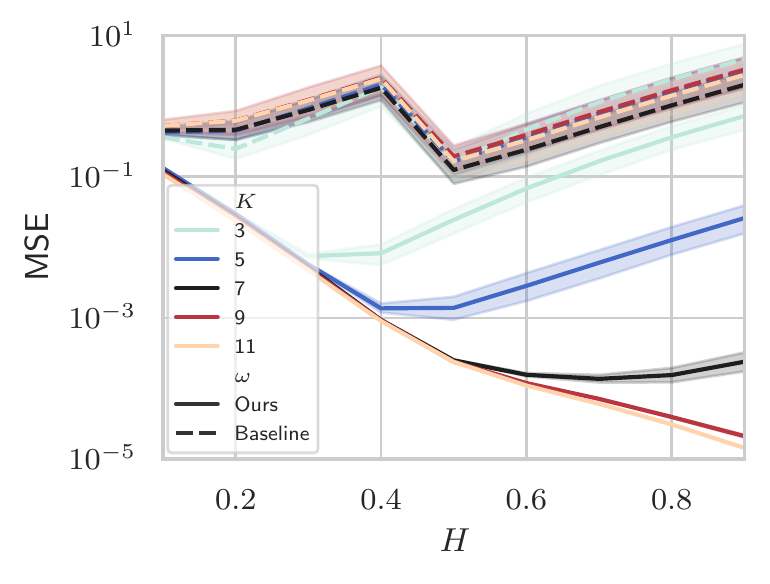}\vspace{-3mm}
    \caption{Mean square error (MSE) with $95\%$ confidence intervals vs. $H$ for varying $K$.}
    \label{fig:val-type2}
\end{wrapfigure}
\vspace{1mm}\noindent\textbf{MSE of the generated trajectories for MA-fBM and for varying $K$.\,}
On a more practical level, we take integration and numerical errors into account by simulating paths using MA-fBM and comparing to paths of the true integral
driven by the \emph{same} Wiener noise.
This is only possible for Type II, as for Type I one would need to start the integration from $-\infty$. Paths are generated from $t=0$ to $t=10$, with $4000$ integration steps for the approximation and $40000$ for the true integral.
We generate the paths over a range of Hurst indices and different $K$ values.
For each setting, $16$ paths are sampled.
Our approach for optimising $\omega_k$ values (\cref{sec:omega}) is compared to a baseline where $\omega_k$ is derived by a piece-wise approximation of the Laplace integral (\cf~\cref{app:gamma-omega-riemann}).~\cref{fig:val-type2} shows considerably better results in favor of our approach. Increasing $K$ has a rapid positive impact on the accuracy of the approximation with diminishing returns, further confirming our theoretical insights in~\cref{sec:bg}. We provide examples of individual trajectories generated in this experiment in~\cref{app:trajectories}.

\begin{wrapfigure}[9]{r}{0.3\linewidth}
    \centering
    \vspace{-5mm}
    \includegraphics[width=\linewidth]{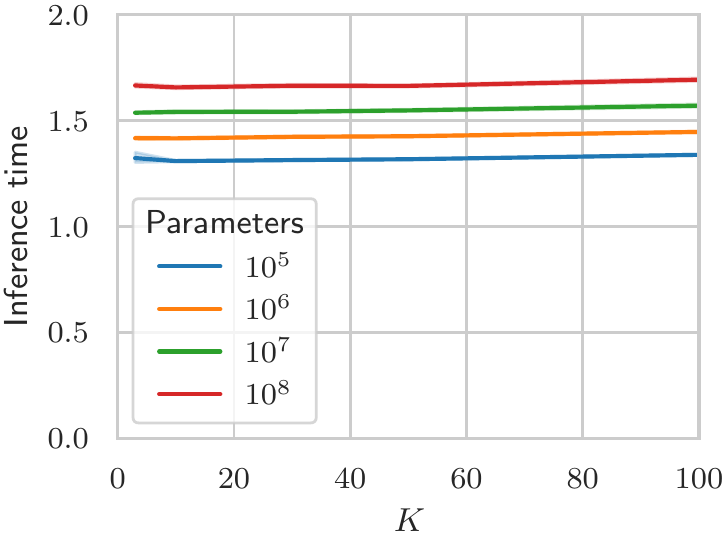}\vspace{-3mm}
    \caption{ $K$ vs. the run-time.}
    \label{fig:k-vs-time}
\end{wrapfigure}
\vspace{1mm}\noindent\textbf{Impact of $K$ and the $\#$parameters on inference time.\,}
We investigate the factors that influence the run-time in~\cref{fig:k-vs-time}, where $K$ OU--processes are gradually included to systems with increasing number of network parameters. Note that, since our approximation is driven by \emph{$1$ Wiener process}, and the control function $u(\postZ(t), t)$ is scalar, the impact on computational load of including more processes is limited and the run-time is still dominated by the size of the neural networks. 
This is good news as different applications might demand different number of OU--processes.

\section{Conclusion}
In this paper, we have proposed a new approach for performing variational inference on stochastic differential equations driven by \emph{fractional} Brownian motion (fBM). 
We began by uncovering the relatively unexplored Markov representation of fBM, allowing us to approximate non-Markovian paths using a linear combination of Wiener processes. This approximation enabled us to derive evidence lower bounds through Girsanov's change of measure, yielding posterior path measures as well as likelihood estimates. We also solved for optimal coefficients for combining these processes, in closed form.
Our diverse experimental study, spanning fOU bridges and Hurst index estimation, have consistently validated the effectiveness of our approach.
Moreover, our novel, continuous-time architecture, powered by Markov-approximate fBM driven neural-SDEs, has demonstrated improvements in video prediction, particularly when inferring the Hurst parameter during inference.

\paragraph{Limitations and future work} 
In our experiments, we observed increased computational overhead for larger time horizons due to SDE integration, although the expansion of the number of processes incurred minimal runtime costs.
We have also observed super-polynomial convergence empirically and recalled weaker polynomial rates in the literature.
Our Markov approximation still lacks a tight convergence bound. 
Our future work will also extend our framework to (fractional) Levy processes, which offer enhanced capabilities for modeling \emph{heavy-tailed} noise/data distributions.

\paragraph{Acknowledgments}
The authors thank Jonas Degrave and Tom Lefebvre for insightful discussions. M.O. acknowledges funding by Deutsche Forschungsgemeinschaft (DFG)-SFB1294/ 1-318763901. This research received funding from the Flemish Government under the "Onderzoeksprogramma Artificiële Intelligentie (AI) Vlaanderen" programme. Furthermore it was supported by Flanders Make under the SBO project CADAIVISION. 

\section*{Ethics Statement}
Our work is driven by a dedication to the advancement of knowledge and the betterment of society. While being largely theoretical, similar to many works advancing artificial intelligence, our work deserves an ethical consideration, which we present below.

All of our experiments were either run on publicly available datasets or on data that is synthetically generated. No human or animal subjects have been involved at any stage of this work.  
Our models are designed to enhance the understanding and prediction of real-world processes without causing harm or perpetuating unjust biases, unless provided in the datasets. While we do not foresee any issue with methodological bias, we have not analyzed the inherent biases of our algorithm and there might be implications in applications demanding utmost fairness. 

We aptly acknowledge the contributions of researchers whose work laid the foundation for our own. Proper citations and credit are given to previous studies and authors. All authors declare that there are no conflicts of interest that could compromise the impartiality and objectivity of this research. All authors have reviewed and approved the final manuscript before submission.

\section*{Reproducibility Statement}
We are committed to transparency in research and for this reason will make our implementation publicly available upon publication. To demonstrate our dedication, we have submitted all source code as part of the appendix. Considerable parts involve: (i) the Markov approximation and optimisation of the $\omega_k$ coefficients; (ii) maximising ELBO to perform variational inference between the prior $\diff Z(t)$ and the posterior $\diff \hat{Z}(t)$ and (iii) the novel neural-SDE based video prediction architecture making use of all our contributions.

\appendix
\section*{Appendix}\vspace{-1mm}
\section{Further Discussions}\vspace{-1mm}
\paragraph{Optimal choices for $\omega$ and $\gamma$ values}
Regarding the Type II case, there are different ways of determining $\gamma_k$ and $\omega_k$ in the literature~\citep{carmona1998fractional,bayer2023markovian,harms2019affine} some of which can lead to super-polynomial convergence~\citep{bayer2023markovian} under certain assumptions, while more general choices are still shown to converge, though with a weaker rate~\citep{alfonsi2021approximation} while still being \emph{strong} (path-wise) and of arbitrarily high polynomial order~\citep{harms2020strong}.
Some of these works state that such geometric choice of the quadrature intervals simplifies the proofs while being not optimal and smarter choices can exist (even with better rate of convergence). This is the reason why we believe that our computationally tractable, closed form expressions which optimally solve for these values lead to good, super-polynomial convergence both for types II and I (since the first type also admits a similar type of analysis).

\paragraph{Practical considerations for choosing $\gamma_k$}
Defining $\gamma_k$ as $(1/\gamma_{\max},\ldots,\gamma_{\max})$ is
a convenient way to indicate some practical considerations for choosing $\gamma_k$.
\cite{carmona1998simultaneous} show that $\gamma \dt > 1/2$ leads to unstable integration of the OU--process, where $\dt$ is the integration step. Care should be taken that $\gamma_{\max} \dt < 1/2$, either by decreasing $\gamma_{\max}$ or decreasing the integration step $\dt$.
Additionally, choosing large values for $\gamma$ is undesirable for numerical reasons.
Especially when using lower precision, numerical overflow can be a problem.
Since an OU--process reaches equilibrium after time $1/\gamma$, a practical
lower bound for $\gamma_{\max}$ is the length of the modelled sequences.
This ensures that memory of the MA-fBM process is modelled for at least the length of the sequence.

\paragraph{Time horizon for optimising $\omega_k$}
The closed form expressions for $\omega_k$ are in function of $H$ and the
time horizon $T$ (\cref{eq:omega-bI}).
Since the criterion is defined over the time interval $[0, \; T]$, it makes
sense to choose $T$ equal to the typical (or maximal) length of sequences
in the modelled dataset.
Specifically for "Type I", we advise to choose $T$ at two or three times the modelled sequence length, as at $t=0$, this process is already at equilibrium,
and its 'history' should be accounted for in the criterion.
We have observed better empirical results when choosing $T$ at a multiple of the sequence length.

\paragraph{State dependent diffusions}
For the case, where the diffusion $\sigma(X,t)$  explicitly depends on the state variable $X$, our Markovian approximation results in a 'standard' white noise SDE for the augmented system. 
As such, it does not suffer from problems with proper definitions of stochastic integrals as compared to the original SDE driven by fBM for such cases. 
Hence, a straightforward Itô--interpretation of our augmented SDE is, in principle, possible.
This might indicate, at first glance, that simple numerical solvers such as Euler's method could be sufficient for simulating the augmented SDE required 
for computing posterior expectations for the ELBO. 
While this point needs further theoretical investigation, preliminary simulations for for simple models with state dependent diffusions indicate that an Euler approximation (in accordance with known results for direct simulations of SDE driven by fBM~\citep{lysy2013statistical}) quickly lead to deviations from known analytical results. 
Hence, for state dependent diffusions, we resort to the Stratonovich interpretation of the augmented system and use corresponding higher order solvers~\cite{kidger2021on}\footnote{see \eg {\url{https://docs.kidger.site/diffrax/usage/how-to-choose-a-solver/\#stochastic-differential-equations}}}.
This approach yields excellent (pathwise) agreements with exact analytical results as we show in~\cref{sec:exp}. Although the ELBO for SDE is derived from Girsanov's change of measure theorem for Itô--SDE, by the known correspondence (resulting in a change of drift functions, when diffusions are state dependent)~\citep{gardiner1985handbook} between Itô and Stratonovich SDE
we conclude that within this approach, optimisation of the ELBO with respect to model parameters will also yield the corresponding estimates for the Stratonovich interpretation.

\paragraph{On initial values for "Type I"}
The initial values for "Type I" can be understood as resulting from an OU--process which was started at some negative time $t \to -\infty$ so that
\begin{equation}
    Y^{(I)}_k(0) = \int_{-\infty}^0 e^{\gamma_k s} \dWs
    \label{eq:y0-integral}
\end{equation}
and $Y^{(I)}_k(0)$ can be considered as samples from the joint stationary distribution. 
Because the stationary distribution is normal~\cite[Theorem 2.16]{harms2019affine} we can simply sample initial states of the $Y_k(t)$ processes for Type I with covariance $\expE{Y_i(0)Y_j(0)}$.
Using Itô isometry~\citep{oksendal2003stochastic}:
\begin{align}
    \expE{Y_i(0)Y_j(0)} &= \expE{\int_{-\infty}^0 e^{\gamma_i s} \dWs \int_{-\infty}^0 e^{\gamma_j s} \dWs} \\
    &= \int_{-\infty}^0 e^{(\gamma_i + \gamma_j) s} \diff s \\
    &= \frac{1}{\gamma_i + \gamma_j} \, .
    \label{eq:cov-y0}
\end{align}

\section{Proofs and Further Theoretical Details}
\subsection{The Girsanov theorem II and the KL divergence of measures}
\label{app:girsanov2}
We now state the variation II of the Girsanov theorem~\citep{oksendal2003stochastic} in our notation.
Let $X(t)\in\R^n$ be an It\^{o} process w.r.t. measure $P$ of the form:
\begin{equation}
    \dXt = b_{\theta}\left(X(t), t\right) \dt + \sigma_{\theta}\left(X(t), t\right) \dWt,
\end{equation}
where $0\leq t\leq T$, $W(t)\in\R^m$, $b_{\theta}\left(X(t), t\right)\in\R^n$ and $\sigma_{\theta}\left(X(t), t\right)\in\R^{n\times m}$. Define a measure $Q$ via:
\begin{equation}
\label{eq:Radon1}
    \frac{dQ}{dP} = M_T\vcentcolon=\exp\left[- \int_{0}^T u(X(t),t)\dWt - \frac{1}{2} \int_{0}^T u^2(X(t),t)\dt\right].
\end{equation}
Then
\begin{equation}
    W^\prime (t) := \int_0^T u(X(t), t) dt + W(T) 
\end{equation}
is a Brownian motion w.r.t. $Q$ and the process $X(t)$ has the following representation in terms of $B^\prime(t)$:
\begin{equation}
    \dXt = \alpha_{\theta}\left(X(t), t\right) \dt + \sigma_{\theta}\left(X(t), t\right) \diff{{W^\prime}(t)},
\end{equation}
where the new drift is:
\begin{equation}
\alpha_{\theta}\left(X(t), t\right) = b_{\theta}\left(X(t), t\right) -\sigma_{\theta}\left(X(t), t\right)u\left(X(t), t\right).
\end{equation}
We can also rewrite the Radon--Nykodim derivative in~\cref{eq:Radon1} as
\begin{align}
\label{Radon2}
\frac{dQ}{dP} &= \exp\left[ \int_0^T u\left(X(t), t\right) \dWt  - \frac{1}{2}  \int_0^T u^2\left(X(t), t\right)\dt \right]  \\
&=\exp\left[ \int_0^T u\left(X(t), t\right)  \left(\diff{W^{\prime}(t)} + u\left(X(t), t\right)\dt\right) - \frac{1}{2}  \int_0^T u\left(X(t), t\right) \dt \right]  \\
&=\exp\left[ \int_0^T u\left(X(t), t\right) \diff{W^{\prime}(t)} + \frac{1}{2}  \int_0^T u\left(X(t), t\right) \dt \right].
\end{align}
Thus, similar to~\cite{li2020scalable}, we get the KL divergence
\begin{equation}
E_Q\left[\ln \frac{dQ}{dP} \right] = \frac{1}{2}  \int_0^T E_Q[u^2\left(X(t), t\right)] dt.
\end{equation}

\section{Covariances}
The full derivation of covariances between some processes relevant to this work are described here. 

\paragraph{Fractional Brownian motion (Type II)}
Using Itô isometry~\citep{oksendal2003stochastic} we know that for $t>s$
\begin{equation}
\E\left[\int_0^t (t - u)^{H-1/2} dW_u \int_0^s (s-u)^{H-1/2} dW_u \right] = \int_0^s \left((t-u)(s-u)\right)^{H-1/2} \diff u
\end{equation}
Thus
\begin{equation}
\E\left[B_H^{(II)}(t) B_H^{(II)}(s)\right] = \frac{1}{\Gamma^2(H+1/2)} \int_0^s \left((t-u)(s-u)\right)^{H-1/2} \diff u
\label{eq:cov-b2}
\end{equation}

\paragraph{OU--processes driven by the same Wiener process}
Observe two Ornstein--Uhlenbeck processes driven by the same Wiener process:
\begin{equation}
    \begin{dcases}
        dY_i(t) = - \gamma_i Y_i(t) dt + dW(t) \\
        dY_j(t) = - \gamma_j Y_j(t) dt + dW(t)
    \end{dcases}
\end{equation}
Their covariance can be written as:
\begin{align}
    \Cov(Y_i(t), Y_j(t)) &= \E\left[(Y_i(t) - \E\left[Y_i(t)\right]) (Y_j(t) - \E\left[Y_j(t)\right])\right] \\
     &= \E\left[Y_i(t) Y_j(t)\right] \\
     &= \E\left[\int_0^t e^{- \gamma_i (t - s)} \diff W(s) \int_0^t e^{- \gamma_j (t - s)} \diff W(s)\right]\\
     \label{eq:cov-yyij-uses-ito}
     &= \int_0^t e^{- (\gamma_i + \gamma_j) (t - s)} \diff s \\
    \label{eq:cov-yyij}
    &= \frac{1}{\gamma_i + \gamma_j} - \frac{e^{- (\gamma_i + \gamma_j) t}}{\gamma_i + \gamma_j}
\end{align}
where \cref{eq:cov-yyij-uses-ito} is obtained following the Itô isometry~\citep{oksendal2003stochastic}.

\paragraph{Markov approximated fractional Brownian motion (Type I)}
Recall that (\cref{remark:markov}) $$
\hat{B}_H^{(I)}(t) = \sum_k \omega_k ( Y_k(t) - Y_k(0))
$$
where (\cref{eq:solved-y-sde})
$$Y_k(t) - Y_k(0) = Y_k(0) ( e^{- \gamma_k t} - 1) + \int_0^t e^{- \gamma_k (t - s)} \diff W(s)
$$
and $\E[Y_i(0) Y_j(0)] = \frac{1}{\gamma_i + \gamma_j}$ (\cref{eq:cov-y0}).
For $t > \tau$:

\begin{align}
    \expE{\hat{B}_H^{(I)}(t) \hat{B}_H^{(I)}(\tau)} &= \E \left[\left(\sum_k \omega_k \left(Y_k(t) - Y_k(0)\right)\right)\left(\sum_k \omega_k \left(Y_k(\tau) - Y_k(0)\right)\right) \right] \\
     &= \sum_{i,j} \omega_i \omega_j \E[\left(Y_i(t) - Y_i(0)\right) \left(Y_j(\tau) - Y_j(0)\right)] \\
     &= \sum_{i,j} \omega_i \omega_j \E\left[\left(Y_i(0) (e^{- \gamma_i t} - 1) + \int_0^t e^{- \gamma_i (t - s)} \diff W(s)\right) \right. \\ &\qquad\qquad\quad\left. \cdot\left(Y_j(0) (e^{- \gamma_j \tau} - 1) + \int_0^\tau e^{- \gamma_j (\tau - s)} \diff W(s) \right) \right] \nonumber \\
    &= \sum_{i,j} \omega_i \omega_j \biggl( \expE{Y_i(0) Y_j(0)}(e^{- \gamma_i t} - 1)(e^{- \gamma_j \tau} - 1) \\
    &\qquad\qquad\qquad\qquad + \int_0^\tau ( e^{- \gamma_i (t - s)}e^{- \gamma_j (\tau - s)} \diff s \biggr) \nonumber \\
     &= \sum_{i,j} \omega_i \omega_j \frac{1 - e^{- \gamma_i t} - e^{- \gamma_j \tau} + e^{- \gamma_i (t - \tau)}}{\gamma_i + \gamma_j}
     \label{eq:cov-bh-1}
\end{align}

\paragraph{Markov approximated fractional Brownian motion (Type II)}
Recall that (\cref{remark:markov})
$$
\hat{B}_H^{(II)}(t) = \sum_k \omega_k Y_k(t), \qquad Y_k(0)=0, \quad k=1,\ldots,K
$$
and for $t > \tau$:
\begin{align}
    \expE{\hat{B}_H^{(II)}(t) \hat{B}_H^{(II)}(\tau)} &= \E \left[\left(\sum_k \omega_k Y_k(t)\right)\left(\sum_{k=1}^K \omega_k Y_k(\tau)\right) \right] \\
    &= \sum_{i,j} \omega_i \omega_j \E[Y_i(t) Y_j(\tau)] \\
    &= \sum_{i,j} \omega_i \omega_j \E\left[\int_0^t e^{- \gamma_i (t - s)} \diff W(s) \int_0^\tau e^{- \gamma_j (\tau - s)} \diff W(s) \right] \\
    &= \sum_{i,j} \omega_i \omega_j \int_0^\tau e^{- \gamma_i (t - s) - \gamma_j (\tau - s)} ds \\
    &= \sum_{i,j} \omega_i \omega_j \left( \frac{e^{- \gamma_i (t - \tau)}}{\gamma_i + \gamma_j} - \frac{e^{- \gamma_i t - \gamma_j \tau}}{\gamma_i + \gamma_j} \right)
    \label{eq:cov-bh-2}
\end{align}

\paragraph{fBM and MA-fBM (Type I)}
Since (\cref{remark:markov}) $$
\hat{B}_H^{(I)}(t) = \sum_k \omega_k ( Y_k(t) - Y_k(0))
$$
where (\cref{eq:solved-y-sde})
$$Y_k(t) - Y_k(0) = Y_k(0) ( e^{- \gamma_k t} - 1) + \int_0^t e^{- \gamma_k (t - s)} \diff W(s)
$$
and (\cref{eq:y0-integral})
$$
Y_k(0) = \int_{-\infty}^0 e^{\gamma_k s} \dWs \, .
$$
we can write
\begin{equation}
    \hat{B}_H^{(I)}(t) = \sum_k \omega_k \left( (e^{- \gamma_k t} - 1)\int_{-\infty}^0 e^{\gamma_k s} \dWs +  \int_0^t e^{- \gamma_k (t - s)} \diff W(s) \right) \, .
\end{equation}
This leads to the following derivation (using Itô isometry~\citep{oksendal2003stochastic}):
\begin{align}
        \expE{\hat{B}_H^{(I)}(t)B_H^{(I)}(t)} &= \frac{1}{\Gamma(H+1/2)}\sum_k \omega_k \E
        \Biggl[
        \biggl(
            \int_{-\infty}^0 \left((t-s)^{H-1/2}-(-s)^{H-1/2}\right) \dWs
            \nonumber \\
            &\qquad\qquad\qquad\qquad\qquad\qquad\qquad\qquad\qquad+ \int_0^t(t-s)^{H-1/2} \dWs
        \biggr)
        \nonumber \\
        &\qquad\qquad\cdot\left(
            (e^{- \gamma_k t} - 1)\int_{-\infty}^0e^{\gamma_k s}\dWs + \int_0^t e^{-\gamma_k(t-s)}\dWs
        \right)
        \Biggr] \\
        &= \frac{1}{\Gamma(H+1/2)}\sum_k \omega_k
        \Biggl(
            (e^{- \gamma_k t} - 1)\int_{-\infty}^0
                \left((t-s)^{H-1/2}-(-s)^{H-1/2}\right)e^{\gamma_k s}
            \diff s
            \nonumber \\ &\qquad\qquad\qquad\qquad\qquad\qquad\qquad\enskip+
            \int_0^t
                (t-s)^{H-1/2}e^{-\gamma_k(t-s)}
            \diff s
        \Biggr) \\
        &= \sum_k \omega_k \frac{2-e^{-\gamma_k t}-Q(H+1/2,\gamma_k t)e^{\gamma_k t}}{\gamma_k^{H+1/2}}
        \label{eq:cov-b-bh-1}
\end{align}

where $Q(z, x)=\frac{1}{\Gamma(z)}\int_x^\infty t^{z-1}e^{-t}\dt$ is the regularized upper incomplete gamma function.

\paragraph{fBM and MA-fBM (Type II)}
\begin{align}
        \expE{\hat{B}_H^{(II)}(t)B_H^{(II)}(t)} &= \frac{1}{\Gamma(H+1/2)}\sum_k \omega_k \expE{\int_0^t e^{-\gamma_k(t-s)}\dWs \int_0^t(t-s)^{H-1/2}\diff s} \\
        &= \frac{1}{\Gamma(H+1/2)}\sum_k \omega_k \int_0^t e^{-\gamma_k(t-s)}(t-s)^{H-1/2} \diff s \\
        &= \sum_k \omega_k \frac{P(H+1/2,\gamma_k t)}{\gamma_k^{H+1/2}}
        \label{eq:cov-b-bh-2}
\end{align}
where $P(z, x)=\frac{1}{\Gamma(z)}\int_0^x t^{z-1}e^{-t}\dt$ is the regularized lower incomplete gamma function.

\section{Choosing \texorpdfstring{$\omega_k$}~ values}
\subsection{Baseline}
\label{app:gamma-omega-riemann}
To approximate the integral in equation (\ref{eq:v}) for $H<1/2$ we do a piece-wise linear approximation of the integral between the known $Y_k(t)$ values:
\begin{equation}
    \sum_{k=1}^K \omega_k Y_k(t) = \sum_{k=1}^{K-1} \int_{\gamma_k}^{\gamma_{k+1}} \left( \frac{\gamma_{k+1} - \gamma}{\gamma_{k+1} - \gamma_k} Y_k(t) + \frac{\gamma - \gamma_k}{\gamma_{k+1} - \gamma_k} Y_{k+1}(t) \right) \mu(\gamma) \diff \gamma
\end{equation}
For $H>1/2$ we approximate $\partial_\gamma Y_\gamma(t)$ with finite differences:
\begin{equation}
    \sum_{k=1}^K \omega_k Y_k(t) = \sum_{k=1}^{K-1} - \frac{Y_{k+1}(t) - Y_k(t)}{\gamma_{k+1} - \gamma_k} \int_{\gamma_k}^{\gamma_{k+1}} \nu(\gamma) \diff \gamma
\end{equation}
This leads to the following proposal for $\omega_k$:
\begin{equation}
    \omega_k = 
    \begin{dcases}
            \frac{1}{\Gamma(\alpha) \Gamma(1 - \alpha)} \Biggl( \1_\mathrm{k>1} \frac{\frac{\gamma_k^{2-\alpha}-\gamma_{k-1}^{2-\alpha}}{2-\alpha} - \gamma_{k-1} \frac{\gamma_k^{1-\alpha}-\gamma_{k-1}^{1-\alpha}}{1-\alpha}}{\gamma_k-\gamma_{k-1}} \\
        \begin{aligned}
            + \1_\mathrm{k<K} \frac{\gamma_{k+1} \frac{\gamma_{k+1}^{1-\alpha} - \gamma_k^{1-\alpha}}{1-\alpha} - \frac{\gamma_{k+1}^{2-\alpha}-\gamma_k^{2-\alpha}}{2-\alpha}}{\gamma_{k+1}-\gamma_k} \Biggr),& \quad H<1/2
        \\
        \frac{- 1}{(2 - \alpha) \Gamma(\alpha) \Gamma(2 - \alpha)} \left( \1_\mathrm{k>1}\frac{\gamma_k^{2-\alpha} - \gamma_{k-1}^{2-\alpha}}{\gamma_k-\gamma_{k-1}} - \1_\mathrm{k<K}\frac{\gamma_{k+1}^{2-\alpha} - \gamma_k^{2-\alpha}}{\gamma_{k+1}-\gamma_k} \right),& \quad H>1/2
        \end{aligned}
    \end{dcases}
\end{equation}
where $\alpha = H + 1/2$.

\subsection{A Proof for the Optimized \texorpdfstring{$\omega_k$}~ Values}
\label{app:optimized-omega}
To optimize $\omega_k$ values, we first provide a closed form expression for the approximation error and then show how we can solve for the $\omega_k$ that minimize this error.

\paragraph{Type I} We will start by optimizing $\omega_k$ for Type I. Consider the error:
\begin{align}
{\cal{E}}^{(I)}(\bm{\omega}) &= \int_0^T \E\left[\left(\hat{B}^{(I)}_H(t) - \BHonet\right)^2\right] \\
&= \int_0^T \left(\expE{\hat{B}^{(I)}_H(t)^2} + \expE{\BHonet^2} - 2 \expE{\hat{B}^{(I)}_H(t) \BHonet} \right) \dt
\end{align}
Using \cref{eq:cov-type1,eq:cov-bh-1,eq:cov-b-bh-1}
\begin{align}
{\cal{E}}^{(I)}(\bm{\omega}) &= 
\int_0^T \bigg( \sum_{i, j} \omega_i \omega_j \frac{2-e^{-\gamma_i t}-e^{-\gamma_j t}}{\gamma_i+\gamma_j}+t^{2 H} \nonumber \\
&\quad-2 \sum_k \omega_k \frac{2-e^{-\gamma_k t}-Q\left(H+1 / 2, \gamma_k t\right) e^{\gamma_k t}}{\gamma_k^{H+1 / 2}} \bigg) \dt \\
&= \sum_{i, j} \omega_i \omega_j \frac{2 T+\frac{e^{-\gamma_i T}-1}{\gamma_i}+\frac{e^{-\gamma_j T}-1}{\gamma_j}}{\gamma_i+\gamma_j}+\frac{T^{2 H+1}}{2 H+1} \nonumber \\
&\quad-2 \sum_k \omega_k\left(\frac{2 T}{\gamma_k^{H+1 / 2}}-\frac{T^{H+1 / 2}}{\gamma_k \Gamma(H+3 / 2)}+\frac{e^{-\gamma_k T}-Q\left(H+1 / 2, \gamma_k T\right) e^{\gamma_k T}}{\gamma_k^{H+3 / 2}}\right)
\end{align}
This leads to the quadratic form ${\cal{E}}^{(I)}(\bm{\omega})=\bm{\omega}^T \mA^{(I)} \bm{\omega} - 2 {\vb^{(I)}}^T \bm{\omega} + c^{(I)}$ with
\begin{align}
    \mA^{(I)}_{i,j} &= \frac{2T + \frac{e^{-\gamma_i T}-1}{\gamma_i} + \frac{e^{-\gamma_j T}-1}{\gamma_j}}{\gamma_i + \gamma_j}\\
    \vb^{(I)}_k &=
        \frac{2T}{\gamma_k^{H+1/2}}
        - \frac{T^{H+1/2}}{\gamma_k \Gamma(H+3/2)}
        + \frac{e^{-\gamma_k T} - Q(H+1/2, \gamma_k T) e^{\gamma_k T}}{\gamma_k^{H+3/2}}\\
    c^{(I)} &= \frac{T^{2 H+1}}{2 H+1} \,.
\end{align}

\paragraph{Type II} We now repeat a similar procedure for the Type II.
\begin{align}
{\cal{E}}^{(II)}(\bm{\omega}) &= \int_0^T \E\left[\left(\hat{B}^{(II)}_H(t) - \BHtwot\right)^2\right] \\
&= \int_0^T \left(\expE{\hat{B}^{(II)}_H(t)^2} + \expE{\BHtwot^2} - 2 \expE{\hat{B}^{(II)}_H(t) \BHtwot} \right) \dt
\end{align}
Using \cref{eq:cov-type2,eq:cov-bh-2,eq:cov-b-bh-2}
\begin{align}
{\cal{E}}^{(II)}(\bm{\omega}) &= 
    \int_0^T \sum_{i, j} \omega_i \omega_j \frac{1-e^{-\left(\gamma_i+\gamma_j\right) t}}{\gamma_i+\gamma_j}+\frac{t^{2 H}}{2 H \Gamma(H+1 / 2)^2}-2 \sum_k \omega_k \frac{P\left(H+1 / 2, \gamma_k t\right)}{\gamma_k^{H+1 / 2}} \dt \\
    &= 
\sum_{i, j} \omega_i \omega_j \frac{T+\frac{e^{-\left(\gamma_i+\gamma_j\right) T}-1}{\gamma_i+\gamma_j}}{\gamma_i+\gamma_j}+\frac{T^{2 H+1}}{2 H(2 H+1) \Gamma(H+1 / 2)^2} \\
& -2 \sum_k \omega_k\left(\frac{T}{\gamma_k^{H+1 / 2}} P\left(H+1 / 2, \gamma_k T\right)-\frac{H+1 / 2}{\gamma_k^{H+3 / 2}} P\left(H+3 / 2, \gamma_k T\right)\right)
\end{align}
This leads to the quadratic form ${\cal{E}}^{(II)}(\bm{\omega})=\bm{\omega}^T \mA^{(II)} \bm{\omega} - 2 {\vb^{(II)}}^T \bm{\omega} + c^{(II)}$ with
\begin{align}
    \mA^{(II)}_{i,j} &= \frac{T + \frac{e^{-(\gamma_i+\gamma_j)T}-1}{\gamma_i + \gamma_j}}{\gamma_i + \gamma_j}\\
    \vb^{(II)}_k &=
        \frac{T}{\gamma_k^{H+1/2}} P(H + 1/2, \gamma_k T) - \frac{H+1/2}{\gamma_k^{H+3/2}} P(H+3/2, \gamma_k T)\\
    c^{(II)} &= \frac{T^{2 H+1}}{2 H(2 H+1) \Gamma(H+1 / 2)^2} \,.
\end{align}

\subsection{Numerically stable implementation of \texorpdfstring{$Q(z,x)e^{x}$}~}
The term $Q(H+1/2, \gamma_k T) e^{\gamma_k T}$ in \cref{eq:omega-bI} leads to numerical instability, since $\gamma_k T$ is typically a high number (for the highest $\gamma_k$).
On the other hand, $Q(H+1/2, \gamma_k T)$ is a low number for high $\gamma_k T$.
Our stable implementation makes use of a continued fraction~\cite[eq. (12.6.17)]{cuyt2008handbook}, using the 'Kettenbruch' notation~\cite[sec. 1.1]{cuyt2008handbook} for continued fractions:
\begin{align}
    Q(H+1/2, \gamma_k T) e^{\gamma_k T} &= \frac{\Gamma(H+1/2,\gamma_k T)}{\Gamma(H+1/2)}e^{\gamma_k T} \\
    &= \frac{1}{\Gamma(H+1/2)(\gamma_k T)^{H+1/2}} \underset{m=1}{\overset{\infty}{\mathrm K}}
    \left( \frac{a_m(H+1/2)/(\gamma_k T)}{1} \right)
\end{align}
where $a_m(a)$ is given by
\begin{equation}
    a_1(a)=1, \quad a_{2 j}(a)=j-a, \quad a_{2 j+1}(a)=j, \quad j \geq 1
\end{equation}
In practice we observe better accuracy with the original equation for $\gamma_k T < 10$,
where it is still stable, and only need $5$ fractions to approximate the equation for $\gamma_k T > 10$.

\section{Details on model architectures \& hyperparameters}
\label{app:models}
\subsection{fOU bridge}
For all experiments, $K=5$ and $\gamma_k = (\frac{1}{20},\ldots,20)$.
We use "Type I" and the optimal definitions for $\omega_k$, with a time horizon $T=6$.
The control function is a neural network with two hidden layers of each $1000$ neurons, with $\tanh$ activation function. Its input is represented as $[\sin{t}, \cos{t}, X(t), Y_1(t),\ldots,Y_K(t)]$.
The control function is initialized so that its output is $0$ at the start of training.
Models are trained for $2000$ training steps with a batch size of $32$.
We use the Adam~\citep{kingma2014adam} optimizer with fixed learning rate $10^{-3}$.
We use the \emph{Stratonovich--Milstein} SDE solver~\citep{kidger2021on} with an integration step of $0.01$.
The length of the bridge $T=2$ and observation noise $\sigma=0.1$.

\subsection{Time dependent Hurst index}
\label{app:mbm}
We directly compare our method with the data and estimate found
in the published codebase of~\cite{tong2022learning}\footnote{\url{https://github.com/anh-tong/fractional_neural_sde/blob/7565a2/fractional_neural_sde/example.ipynb}}.
We choose $K=5$ and $\gamma_k = (\frac{1}{20},\ldots,20)$ and
use "Type II" (to match the data and noise type in \cite{tong2022learning}).
The optimal definitions for $\omega_k$, with time horizon $T=2$ are used.
The control function is a neural network with two hidden layers of each $1000$ neurons, with $\tanh$ activation function. Its input is represented as $[\sin{t}, \cos{t}, \sin{2t}, \cos{2t},\ldots,\sin{5t},\cos{5t}, X(t), Y_1(t),\ldots,Y_K(t)]$.
The model is trained for $1000$ training steps with a batch size of $4$.
We use the Adam~\citep{kingma2014adam} optimizer with a learning rate $3\times10^{-3}$,
scheduled with cosine decay to $3\times10^{-4}$ by the end of training.
We use the \emph{Stratonovich-Milstein} SDE solver~\citep{kidger2021on}.
The integration step is $0.01$ and observation noise $\sigma=0.025$
(both identical to~\cite{tong2022learning}).

\subsection{Latent video model}%
For the MA-fBM model, $K=5$ and $\gamma_k = (\frac{1}{20},\ldots,20)$.
We use "Type I" and the corresponding definitions for $\omega_k$, with a time horizon $T=2.4$.
For the BM model, $K=1$, $\gamma_1=0$ and $\omega=1$, which naturally corresponds to white Brownian motion.
The number of latent dimensions $D=6$.

The encoder model consists of four blocks, containing a convolution layer, maxpool,
groupnorm and SiLU activation. Each block reduces spatial dimension by $2$, and the number of features in each block is $(64, 128, 256, 256)$.
The last output is flattened and is the input of a dense layer, with $h$ as output with $64$ features.

The median over the time axis of $h$ is fed into a two layers neural network to 
produce the static content vector $w$. Since the median is permutation invariant, 
$w$ contains no dynamic information, only static information.
$w$ also has $64$ features.

The context model consists of two subsequent $1-D$ convolutions in the temporal dimension. Thus, information is shared over different frames, which is necessary for inference. The output of this model is $g$.
Another model receives $(g_1, h_1, h_2, h_3)$ to infer $q_{x_1}$, the posterior distribution of the initial state of the SDE.
$x_1$ is sampled from $q_{x_1}$, which we model as a diagonal Normal distribution.
The prior $p_{x_1}$ is also optimized, and $\KL(p,q)$ is added to the loss function.

The prior drift $b_\theta(X,t)$ and the control function $u(Z(t),t)$ have the same architecture, a neural network with two hidden layers of each $200$ neurons, with $\tanh$ activation functions.
The shared diffusion $\sigma_\theta(X,t)$ is implemented so that the noise is commutative to allow Milstein solvers~\citep{li2020scalable,kidger2021efficient}, \ie $\sigma_\theta(X,t)$ is diagonal and the $i$-th component on the diagonal only receives $X_i(t)$ as input, where we have defined $D$ separate neural networks for each component.
Each neural network has two layers with $200$ neurons and $\tanh$ activations.

$b_\theta$ and $\sigma_\theta$ receive $X(t)$ as input.
The control function a concatenated vector of $(X(t), Y_1(t),\ldots,Y_K(t), g(t))$.
$g(t)$ is a linear interpolation of $g$ at time $t$.
This enables the control function to use appropriate information to be able
to steer the process correctly.

The resulting states $x$ after integration of the SDE are fed, together with the static
content vector $w$ in the decoder model.
The decoder model has first a dense layer.
The outputs of this first layer are shaped in a $4\times4$ spatial grad.
Subsequently, four blocks with a convolution layer, groupnorm, a spatial nearest neighbour upsampling layer and a SiLU activation.
Thus, the model reaches the correct resolution of $64\times64$.
Two additional convolution layers with SiLU activation and a final sigmoid activation
complete the decoder model.

We train on sequences of $25$ frames, with a time length of $2.4$ ($0.1$ per frame).
The frames have resolution $64\times64$ and $1$ color channel.
Each model was trained for $187500$ training steps with a batch size of $32$.
We use the Adam~\citep{kingma2014adam} optimizer with fixed learning rate $3\times10^{-4}$.
We use the \emph{Stratonovich--Milstein} SDE solver~\citep{kidger2021on} with an integration step of $0.033$ ($3$ integration steps per data frame).
Models were trained on a single NVIDIA GeForce RTX 4090, which takes around $39$ hours for $1$ model.

\section{Additional experimental results}
\label{app:experiments}
\subsection{Generated trajectories of MA-fBM for varying \texorpdfstring{$K$}~}
\label{app:trajectories}
Included here are some of the trajectories used to calculate the MSE
of the generated trajectories for MA-fBM for varying $K$ (\cref{fig:val-type2}).
We show trajectories of MA-fBM with our approach (\cref{sec:omega}) and the baseline method (\cf~\cref{app:gamma-omega-riemann}) for choosing $\omega_k$.
True paths are plotted in black, the approximations with varying $K$ in a color-scale as indicated in the legends,
see \cref{fig:trajectories-1,fig:trajectories-2,fig:trajectories-3,fig:trajectories-4,fig:trajectories-6,fig:trajectories-8,fig:trajectories-9}.
Our method quickly converges to the true path for increasing $K$,
while much slower for the baseline method.

\newcommand{\trajectorysubfigure}[1]{%
\begin{figure}[h!]
    \centering
    \begin{subfigure}{0.49\textwidth}
        \centering
        \includegraphics[width=\textwidth,trim=0 0 0 0,clip]{figures/trajectories/path-approximation-H=0.#1-baseline.pdf}
        \caption{Baseline}
    \end{subfigure}
    \begin{subfigure}{0.49\textwidth}
        \centering
        \includegraphics[width=\textwidth,trim=0 0 0 0,clip]{figures/trajectories/path-approximation-H=0.#1-ours.pdf}
        \caption{Ours}
    \end{subfigure}
    \caption{Generated trajectories of (\textbf{a}) the baseline method summarized in~\cref{app:gamma-omega-riemann} and (\textbf{b}) our method, MA-fBM (\cref{sec:omega}), 
    for varying $K$ and $H = 0.#1$.
    }
    \label{fig:trajectories-#1}
\end{figure}
}
\trajectorysubfigure{1}
\trajectorysubfigure{2}
\trajectorysubfigure{3}
\trajectorysubfigure{4}
\trajectorysubfigure{6}
\trajectorysubfigure{8}
\trajectorysubfigure{9}

\subsection{fOU Bridge}
\Cref{fig:bridge-extra} shows additional results of the fractional Ornstein--Uhlenbeck bridge.
The variances are calculated with \cref{eq:posterior-bridge}, and \cref{eq:lysy} for $\theta>0$ and $H>1/2$ or \cref{eq:cov-type1} for $\theta=0$.
Note that we do not have a useful covariance equation for $\theta>0$ and $H<1/2$~\citep{lysy2013statistical},
so this setting is not included in the experiments.

\newcommand{\fbmsubfigure}[2]{%
    \begin{subfigure}{0.32\textwidth}
        \centering
        \includegraphics[width=\textwidth,trim=6 8 7 7,clip]{figures/fbm-bridge/bridge_hurst_0.#10-theta_#2.00.pdf}
        \caption{$H = 0.#1, \theta=#2.0$}
    \end{subfigure}
}

\begin{figure}[h]
    \centering
    \fbmsubfigure{1}{0}
    \fbmsubfigure{2}{0}
    \fbmsubfigure{3}{0}
    \fbmsubfigure{4}{0}
    \fbmsubfigure{5}{0}
    \fbmsubfigure{6}{0}
    \fbmsubfigure{7}{0}
    \fbmsubfigure{8}{0}
    \fbmsubfigure{9}{0}
    \fbmsubfigure{6}{1}
    \fbmsubfigure{7}{1}
    \fbmsubfigure{8}{1}
    \fbmsubfigure{9}{1}
    \caption{The true variance (blue) of a fOU bridge matches the empirical variance (dashed orange) of our trained models.
    The transparent black lines are the sampled approximate posterior paths used to calculate the empirical variance.
    }
    \label{fig:bridge-extra}
\end{figure}

\end{document}